\documentclass{article} 
\usepackage{iclr2026_conference,times}


\usepackage{amsmath,amsfonts,bm}









\def\eqref#1{equation~\ref{#1}}









\def\1{\bm{1}}










\DeclareMathAlphabet{\mathsfit}{\encodingdefault}{\sfdefault}{m}{sl}
\SetMathAlphabet{\mathsfit}{bold}{\encodingdefault}{\sfdefault}{bx}{n}













\usepackage{hyperref}
\usepackage{url}

\usepackage{graphicx}
\usepackage{booktabs}
\usepackage{multirow}
\usepackage{wrapfig}
\usepackage[table]{xcolor}
\usepackage{caption}
\usepackage{amsthm}
\usepackage{enumitem}
\usepackage{subfigure}
\usepackage{multicol}
\usepackage{soul}
\newtheorem{theorem}{Theorem}
\newtheorem{lemma}{Lemma}
\usepackage{algorithm}
\usepackage{algorithmicx}
\usepackage{algpseudocode}

\usepackage[breakable,skins]{tcolorbox}

\captionsetup[table]{skip=10pt}

\definecolor{MyBlue}{rgb}{0.9, 0.95, 1.0}
\definecolor{case_purple}{RGB}{160, 43, 147}
\definecolor{case_blue}{RGB}{15, 158, 213}
\definecolor{my_green}{RGB}{0, 176, 80}

\definecolor{mylightgray}{HTML}{F5F5F5}
\definecolor{mydarkgray}{HTML}{696969}
\newtcolorbox{promptbox}[1][]{
  colback=mylightgray,            
  colframe=mydarkgray,            
  colbacktitle=mydarkgray,        
  coltitle=white,                 
  fonttitle=\bfseries,            
  rounded corners,                
  boxrule=1pt,                    
  title=#1,                       
  breakable,
  enhanced,           
  arc=3mm,            
  toprule at break=0pt,  
  bottomrule at break=0pt, 
  pad at break*=2mm,      
  top=2mm,
  bottom=2mm,
  left=3mm,
  right=3mm,
  toptitle=1mm,
  bottomtitle=1mm
}

\title{\includegraphics[height=0.9cm]{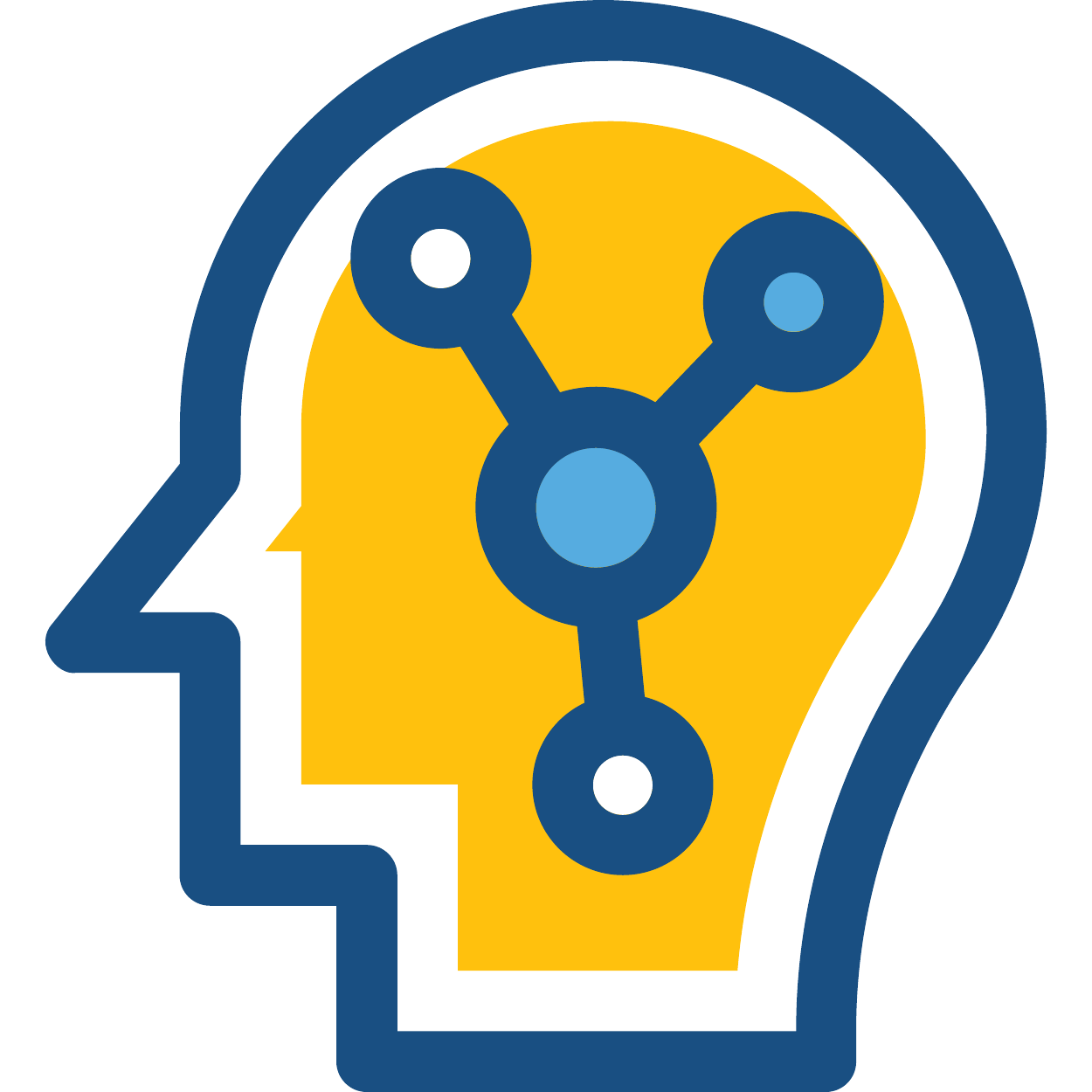} Cog-Rethinker: Hierarchical Metacognitive Reinforcement Learning for LLM Reasoning}


\author{Zexu Sun$^{1}$, Yongcheng Zeng$^{2}$, Erxue Min$^{1}$, Heyang Gao$^{3}$, Bokai Ji$^{1}$, Xu Chen$^{3,\dagger}$\thanks{$^{\dagger}$ Corresponding authors.}
\\
$^1$ Baidu Inc. $^2$ Institute of Automation, Chinese Academy of Sciences \\
$^3$ Gaoling School of Artificial Intelligence, Renmin University of China\\
\texttt{sunzexu0826@gmail.com, xu.chen@ruc.edu.cn}}

\makeatletter
\def\thanks#1{\protected@xdef\@thanks{\@thanks \protect\footnotetext{#1}}}
\makeatother

\iclrfinalcopy
%

\newcommand{\methodabb}{Cog-Rethinker}
\begin{document}

\maketitle

\begin{abstract}
Contemporary progress in large language models (LLMs) has revealed notable inferential capacities via reinforcement learning (RL) employing verifiable reward, facilitating the development of O1 and R1-like reasoning models. 
Directly training from base models with RL is called zero-RL. 
However, previous works rely upon activating LLMs' inherent capacities through fixed prompt templates. This strategy introduces substantial sampling inefficiencies for weak LLMs, as the majority of problems generate invalid outputs during accuracy-driven filtration in reasoning tasks, which causes a waste of samples.  
To solve this issue, we propose \methodabb{}, a novel hierarchical metacognitive RL framework for LLM reasoning. 
Our \methodabb{} mainly focuses on the rollout procedure in RL training. After the direct rollout, our \methodabb{} improves sample utilization in a hierarchical metacognitive two-stage framework. 
By leveraging human cognition during solving problems, 
firstly, it prompts policy to decompose zero-accuracy problems into subproblems to produce final reasoning results. 
Secondly, with zero-accuracy problems in previous rollout stage, it further prompts policy to refine these answers by referencing previous wrong solutions.
Moreover, to enable cold-start of the two new reasoning patterns and maintain train-test consistency across prompt templates, our \methodabb{} applies supervised fine-tuning on the policy using correct samples of the two stages with direct rollout template. Experimental results demonstrate \methodabb{}'s superior performance on various mathematical reasoning benchmarks, we also analyzed its improved sample efficiency that accelerates convergence compared to baseline methods.
\end{abstract}

\section{Introduction}
Recent developments in Large Language Models (LLMs) have exhibited significant advancements in inferential capacities, achieving unprecedented accuracy in complex reasoning challenges and even surpassing human performance in specialized disciplines. Prominent examples including OpenAI's O1~\citep{jaech2024openai}, Google's Gemini-2.0~\citep{gemini}, DeepSeek-R1~\citep{guo2025deepseek}, and Qwen-QwQ~\citep{qwq-32b-preview} demonstrate these improvements through their capacity to replicate human-like systematic reasoning methodologies. Performance optimization is achieved through deliberate temporal resource allocation during inference phases. Despite these breakthroughs, it is still challenging when addressing exceptionally demanding tasks such as mathematical reasoning~\citep{li2024numinamath,he2024olympiadbench} and program synthesis~\citep{jain2024livecodebench}, which necessitates  exploration of expansive solution spaces and meticulous execution of intricate reasoning steps.

Contemporary investigations have prioritized advancing LLMs' sophisticated reasoning capacities through inference-phase optimization strategies. The zero-RL framework~\citep{guo2025deepseek,zeng2025simplerl,liu2025understanding} has emerged as particularly effective, implementing RL on base model by leveraging their own rollouts.
Despite empirical validation, zero-RL exhibits inherent limitations imposed by the foundational competency profile of base LLMs~\citep{zhao2025echo}, primarily reinforcing pre-existing patterns instead of novel cognitive capacities. 
Recent studies~\citep{gandhi2025cognitive,zhang2025and} have demonstrated this limitation, revealing that models such as Llama 3.2~\citep{metallama2024} quickly reach performance plateaus in zero-RL training due to the absence of fundamental cognitive mechanisms. Existing approaches that depend on fixed prompt templates exacerbate these issues, while the accuracy filter leads to significant sample waste during early training stages.  
However, the incorporation of negative samples through more principled designs~\citep{xiong2025minimalist}, can enhance the performance of reasoning models, 
particularly for weaker base models in zero-RL.
Existing research in cognitive science~\citep{thagard2013cognitive,endsley2007cognitive} shows that problem solving can benefit from cognition.
This provokes a crucial research question: 
\begin{center}
\textit{How can we enable LLMs to acquire reasoning behaviors for the negative samples that fully transcend their initial cognitive boundaries?}   
\end{center}

\begin{figure}[!t]
	\subfigure[Problem accuracy across training steps]{\includegraphics[width=0.48\linewidth]{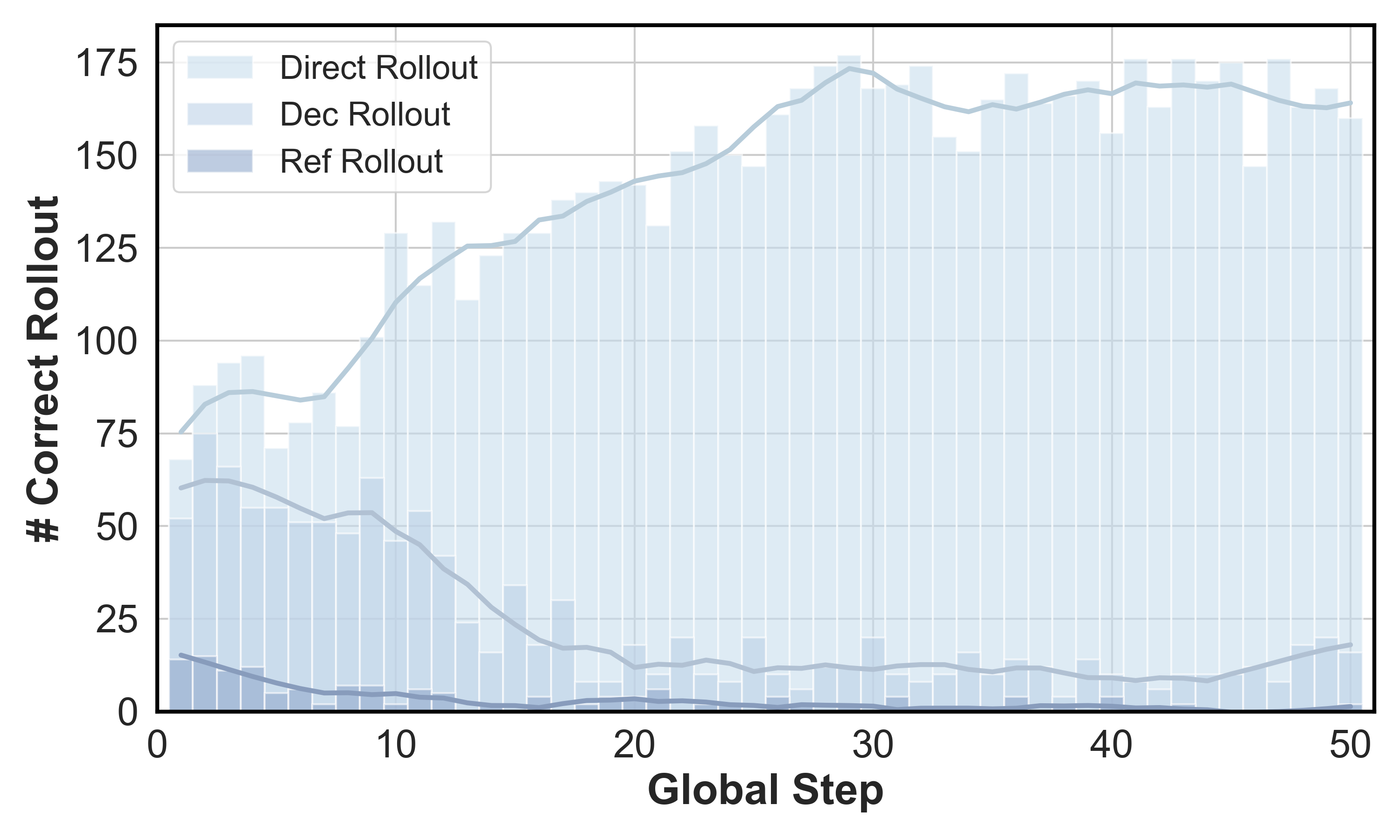}}\quad
	\subfigure[Sample generation statistics during training]{\includegraphics[width=0.48\linewidth]{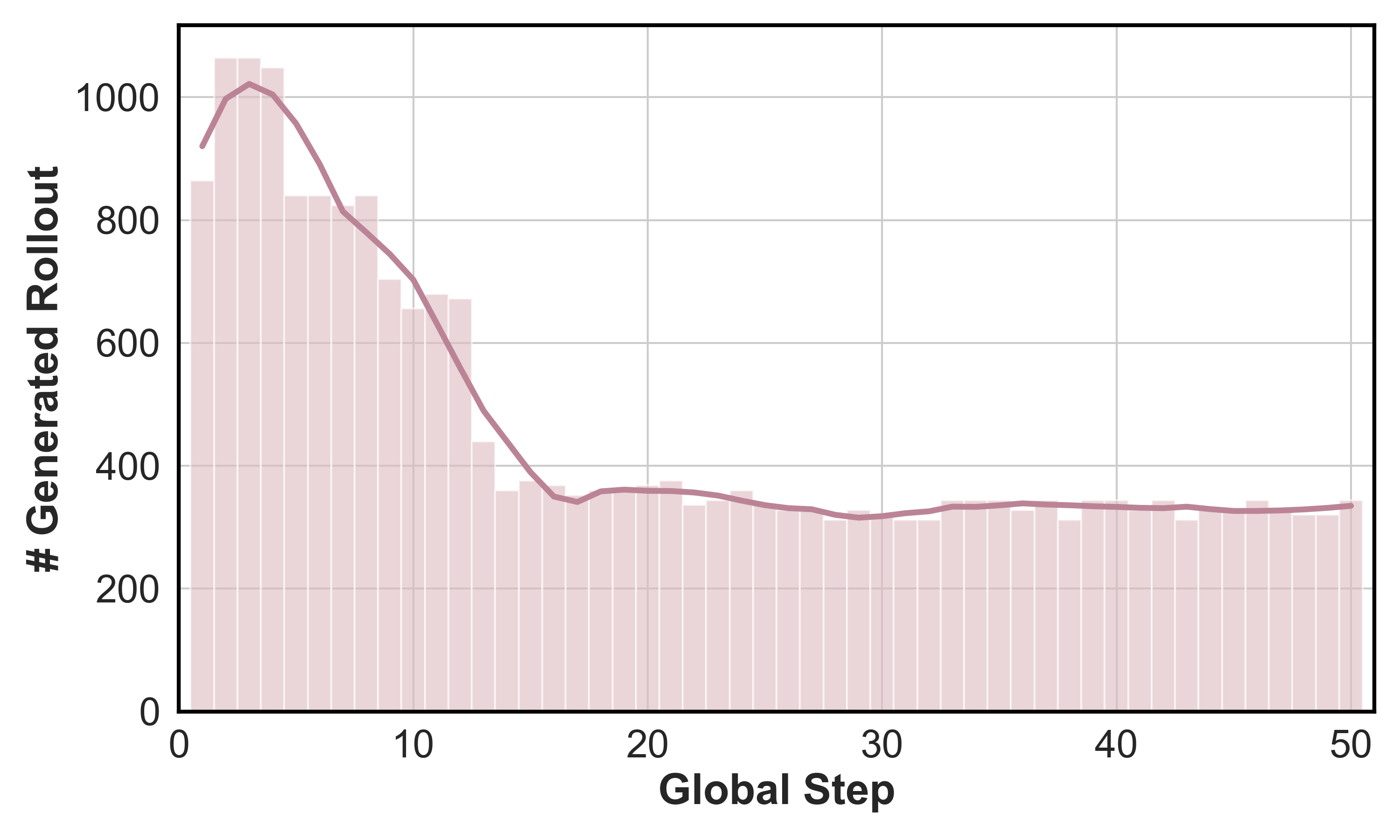}}
	\caption{The case study of our \methodabb{}. The Dec Rollout and Ref Rollout denote the policy generation by problem decomposition and answer reflection, respectively. Our \methodabb{} can generate more correct samples especially at the beginning of training procedure.}\label{fig:case}
    \vspace{-0.6cm}
\end{figure}

In this work, we propose \methodabb{}, a novel hierarchical metacognitive reinforcement learning framework designed to enhance LLM reasoning capabilities. Unlike existing approaches that rely solely on direct rollout, our \methodabb{} introduces two hierarchical metacognitive rollout stages. 
In the first stage, with the zero-accuracy problem after the direct rollout, our \methodabb{} incentivize policy to decompose the problem into manageable subproblems with a provided meta demonstration for sequential solving. 
But, with the policy reasoning ability improving during training, the simple fixed demonstration cannot fully motivate the policy to provide correct decomposition with hard problems. 
To alleviate this, we implement a memory buffer to store the correct decomposition samples generated by the policy itself.
With demonstrations dynamically retrieved based on problem similarity in the decomposition template in the following rollout. 
In the second stage, with the problems of zero-accuracy in the first stage, our \methodabb{} prompts policy to revise incorrect solutions by referencing previous wrong solutions in a structured reflection template. 
Moreover, to maintain train-test consistency and inject new reasoning patterns for cold-start scenarios, we restore all samples in the replay buffer to their original prompt templates and apply supervised fine-tuning (SFT) to the policy using correct samples from the two stages. 
Our \methodabb{} significantly accelerates policy convergence while requiring fewer training samples. We conduct a training visualization of our \methodabb{} on the Qwen2.5-1.5B-Base model in Figure~\ref{fig:case}, the two rollout stages lead to a significant increase in positive sample generation early in training and a consequent major gain in sample utilization efficiency.
Our main contribution is summarized as follows:
\begin{itemize}[leftmargin=*]
    \item We propose \methodabb{}, a novel hierarchical metacognitive reinforcement learning framework that introduces two additional rollout stages -- decomposition and reflection rollout, which significantly enhance sample utilization efficiency in LLM reasoning training.
    \item To ensure stable training and testing dynamics, we develop an adaptive metacognitive buffer for metacognative rollout and apply SFT to policy with correct samples in two stages.
    \item Through experiments across multiple reasoning benchmarks, we demonstrate that \methodabb{} achieves better performance while requiring fewer samples compared to existing approaches.
\end{itemize}

\vspace{-0.1cm}
\section{Related Work}

\vspace{-0.2cm}
\textbf{Reinforcement Learning with Verifiable Reward (RLVR).}
Leveraging rule-based verification for reward computation has become increasingly prevalent in enhancing LLMs' reasoning capabilities \citep{lambert2024t,guo2025deepseek,team2025kimi}. Unlike preference-based approaches that require human feedback collection \citep{christiano2017deep, ouyang2022training, bai2022training, song2023preference}, RLVR employs deterministic verification functions, most commonly answer matching in mathematical domains to generate binary reward signals that guide model optimization \citep{guo2025deepseek,team2025kimi,zeng2025simplerl,xie2025logic}. The PPO \citep{schulman2017proximal} algorithm is the most commonly used reinforcement learning training algorithm. However, when applied to the field of LLMs training, the PPO algorithm often suffers from excessively high resource consumption. As a result, new algorithms have recently been proposed from the perspectives of resource efficiency and training acceleration, including GRPO \citep{shao2024deepseekmath}, Reinforce++ \citep{hu2025reinforce++}, and other similar variants \citep{yu2025dapo,lin2025cppo,kazemnejad2024vineppo,yuan2025s,liu2025understanding}. Recent industry breakthroughs like OpenAI o1 \citep{openai2024openaio1card} and DeepSeek-R1 \citep{guo2025deepseek} have demonstrated RLVR's potential to develop models with superior reasoning patterns. 

\textbf{Inference Scaling for LLM Reasoning.} The auto-regressive nature of LLMs necessitates increased token generation for complex problem-solving. Foundational work like Chain-of-Thought (CoT) \citep{CoT} introduced step-by-step prompting to decompose reasoning tasks, significantly improving performance. Subsequent approaches including Tree-of-Thoughts (ToT) \citep{ToT} and Graph-of-Thoughts (GoT) \citep{GoT} expanded the solution space through structured reasoning pathways. Recent theoretical advances \citep{inferenceScaling,snell2024scaling} have established inference scaling laws that quantify the trade-offs between token generation and inference strategies. Current methods employ various techniques: majority voting and best-of-N sampling \citep{wang2022self,li2023making} generate multiple solutions for optimal selection, while Monte Carlo Tree Search (MCTS) approaches \citep{zhang2023planning,liu2024don,choi2023kcts,zhou2023language} enhance accuracy through extensive computation. Process Reward Models (PRMs) \citep{setlur2024rewarding,snell2024scaling,lightman2023let,wang2024math} have proven particularly effective for complex reasoning by selecting high-quality reasoning paths. Modern methods like Bootstrapped Thought (BoT) \citep{BoT} leverage historical reasoning templates to guide exploration, though the exploration-exploitation balance in template-based approaches \citep{tang2024code,setlur2024rewarding} remains unresolved. Our \methodabb{} advances this frontier through hierarchical metacognitive reinforcement learning, combining template-augmented reasoning with enhanced sample efficiency to achieve superior accuracy.

\section{Preliminaries}
\textbf{Cognitive Engineering.} As demonstrated in \cite{xia2025generative}, cognitive engineering marks a paradigm shift in AI development. To analyze this emerging discipline, we employ the DIKW (Data-Information-Knowledge-Wisdom) hierarchy \citep{zeleny1987management,ackoff1989data} as a theoretical framework, examining how cognitive engineering facilitates the transition from knowledge to wisdom.
The key distinction between cognitive engineering and traditional LLM development approaches lies in their fundamental methodologies. 
Cognitive engineering specifically emulates human thought processes, directly targeting the cognitive attributes of the wisdom level. 


\textbf{\textbf{D}ecouple Clip and \textbf{D}ynamic Sampling \textbf{P}olicy \textbf{O}ptimization (DAPO).} 
DAPO \citep{yu2025dapo} represents an improved version of the GRPO \citep{shao2024deepseekmath} algorithm. During practical training, DAPO samples a group of outputs $\{o_i\}_{i=1}^G$ for each question-answer pair $(q, a)$ and optimizes the policy through the following objective function:
\begin{equation}
\begin{aligned}
\mathcal{L}_{\text{DAPO}}(\theta) =\quad& -\mathbb{E}_{(q,a)\sim \mathcal{D}, \{o_i\}_{i=1}^G\sim \pi_{\theta_\text{old}}(\cdot\mid q)}\\&
\Bigg[\frac{1}{\sum_{i=1}^{G}|o_i|}\sum_{i=1}^{G}\sum_{t=1}^{|o_i|} 
\min \Big( r_{i,t}(\theta) \hat{A}_{i,t},  
\ \text{clip} \Big( r_{i,t}(\theta), 1 - {\varepsilon_{\text{low}}}, 1 + {\varepsilon_{\text{high}}} \Big) \hat{A}_{i,t} \Big) \Bigg]
\\
\text{s.t.}\quad& 0< \Big|\{o_i\mid\texttt{is\_equivalent}(a,o_i)\}\Big|< G,
\label{eq:dapoloss}
\end{aligned}
\end{equation}
where,
\begin{equation*}
    r_{i,t}(\theta)=\frac{\pi_{\theta}(o_{i,t} \mid q, o_{i,<t})}{\pi_{\theta_{\text{old}}}(o_{i,t} \mid q,o_{i,<t})},\quad\hat{A}_{i,t} = \frac{r_i - \text{mean}(\{r_i\}_{i=1}^G)}{\text{std}(\{r_i\}_{i=1}^G)}.
\label{eq:advantage_calculation}
\end{equation*}

Since reward models often suffer from reward hacking \citep{amodei2016concrete,everitt2017reinforcement,everitt2021reward,gao2023scaling}, in mathematical reasoning tasks, a simpler rule-based matching approach is typically employed to determine whether the final answer is correct, providing a binary reward signal. Specifically, given a question-answer pair $(q,a)$ and an output $o$, the binary reward model is typically defined as,
\begin{equation}
R(a,o;x) = 
\begin{cases} 
1, & \text{is\_equivalent}(a, o), \\ 
-1, & \text{otherwise}.
\end{cases}
\end{equation}
The use of this binary reward function in RL enhances training stability and reliability by substantially reducing vulnerabilities to reward hacking.

\section{Methodology}\label{sec:metacognitive-method}
In this section, to easily understand the overall structure of our \methodabb{}, we present the detail visualization in Figure~\ref{fig:method}. In a high level, our \methodabb{} mainly focus on the accuracy filter-based rollout stage in RL training, thus, we directly introduce it from three aspects, the decomposition rollout, the reflection rollout and the policy training.

\subsection{Decomposition Rollout}
In our \methodabb{}, with the zero-accuracy problem after the direct rollout, we apply the decomposition rollout in the first stage. 
When tackling complex reasoning tasks, human problem-solvers frequently resort to decomposition techniques and analogical reasoning strategies for particularly difficult problems~\citep{landauer1997learning}. 
Motivated by this, within the domain of LLMs, existing research in LLMs has provided empirical validation for the effectiveness of such decomposition methods \citep{xue2024decompose,jiang2022draft,zhao2023decomposing,zhou2025solving}.

For a given complex mathematical reasoning problem $Q$, how to incentivize policy to decompose the problem in our desired manner remains a challenge. 
Existing works~\citep{xue2024decompose,sarangi2025decompose} show that when provided with specific decomposition demonstrations, LLMs are capable of breaking down the problem in accordance with the expected format. Therefore, we maintain a metacognitive buffer $\mathcal{M}$ of decomposition demonstrations and pre-construct a set of problem decomposition demonstrations. These examples serve as the reference for the model to learn decomposition patterns and improve its ability to break down complex problems.

\begin{figure}[!t]
    \centering
    \includegraphics[width=0.95\linewidth]{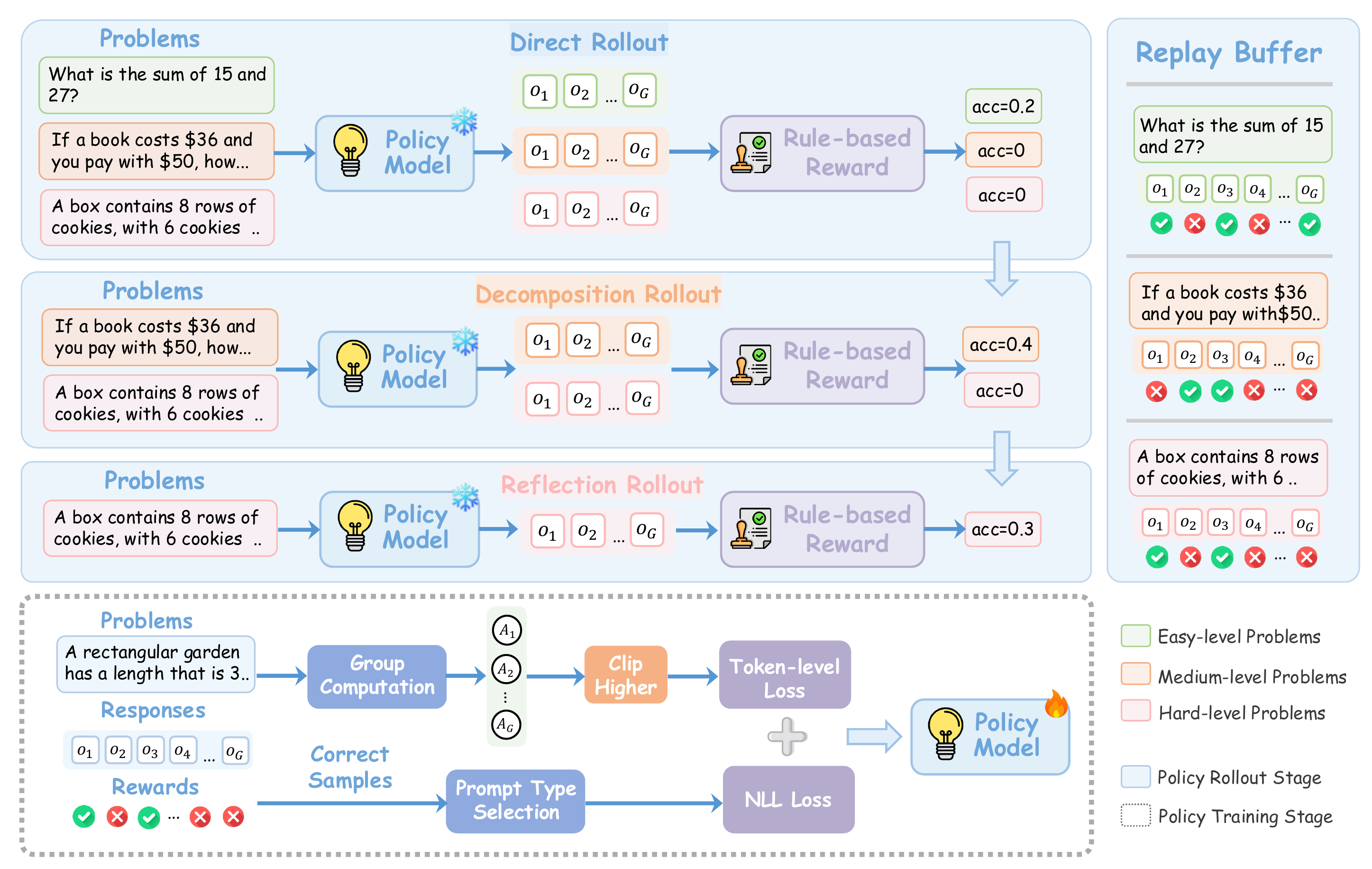}
    \caption{Overall procedure of our \methodabb{}. The upper is the whole rollout stage for different difficulty problems, the lower is the training procedure with token-level policy gradient loss of DAPO and NLL loss of SFT.}
    \label{fig:method}
    \vspace{-0.5cm}
\end{figure}


Specifically, we retrieve the most similar problem $\hat{Q}$ from the decomposition example metacognitive buffer based on problem similarity to assist in the decomposition process,
\begin{align}
\{\hat{Q},\{(\hat{q}_i,\hat{a}_i)\}_{i=1}^k, \hat{A}\} = \operatorname*{argmax}_{Q_i \in \mathcal{M}} \text{sim}(Q, Q_i).
\end{align}
Here, we utilize BM25~\citep{robertson2009probabilistic} for similarity-based retrieval.  
\begin{align}
\text{sim}(Q, Q_i) = \sum_{w \in Q} \text{IDF}(w) \cdot \frac{f_{w,Q_i} \cdot (k + 1)}{f_{w,Q_i} + k \cdot (1 - b + b \cdot \frac{|Q_i|}{\text{avg}_{\mathcal{M}}}))},
\end{align}
where $\text{IDF}(w)$ \citep{sparck1998probabilistic} measures how important a word $w$ is in the question $Q$, downweighting common terms and highlighting rare, meaningful ones. $f_{w,Q_i}$ denotes the frequency of the word $w$ in $Q_i$, $|Q_i|$ represents the length of the query $Q_i$, $\text{avg}_\mathcal{M}$ is the average question length in buffer $\mathcal{M}$, and $k$ and $b$ are hyperparameters. In our experiments, we set $k=1.2$ and $b=0.75$.

Compared to other similarity retrieval algorithms, BM25 demonstrates superior performance in handling text length variations, particularly for long-form responses and extended sequences. Additionally, its lightweight computational cost makes it suitable for integration into RL training.
By leveraging this metacognitive strategy, our \methodabb{} enhances the policy's ability to break down intricate problems into manageable sub-tasks.

After obtaining the most similar question $\hat{Q}$ to the original question $Q$, we prompt the policy to perform an explicit problem decomposition process. Specifically, we input the original question $Q$, the similar question $\hat{Q}$, along with its decomposition and solution process $\{(\hat{q}_i,\hat{a}_i)\}_{i=1}^k$, as well as the final answer $\hat{A}$ into the policy, enabling it to carry out the corresponding decomposition.
After that, we guide the policy to sequentially solve these subproblems, generating corresponding question-answer pairs $ \{(q_i, a_i)\}_{i=1}^n$. Based on these subproblem-answer pairs, we finally prompt the policy to produce the solution to the original problem,
\begin{align}
    \{q_i\}_{i=1}^n = Decompose(\pi_\theta, Q, \hat{Q},\{(\hat{q}_i,\hat{a}_i)\}_{i=1}^k,\hat{A}), \quad     A \sim \pi_\theta(\cdot \mid Q, \{(q_i, a_i)\}_{i=1}^n),
\end{align}
where $\pi_\theta$ represents the policy.

However, predefining decomposition demonstrations is time-consuming and labor-intensive, which limits the scalability and adaptability. To overcome this bottleneck, we propose an automated approach for generating diverse decomposition demonstrations, ensuring sustained and efficient utilization of the decomposition process. Specifically, our \methodabb{} dynamically updates the metacognitive buffer $\mathcal{M}$ by integrating the questions that are successfully solved after decomposition but previously unresolved through direct response:
\begin{align}
    \mathcal{M}\leftarrow \mathcal{M} \cup (Q,\{(q_i,a_i)\}_{i=1}^k, A), \quad \text{if}\ R=1
\end{align}
where $R=1$ indicates that the policy generated the correct answer.
Through dynamic augmentation of the metacognitive buffer with additional decomposition demonstrations, we increase the diversity of available decomposition strategies.
Furthermore, the buffer is designed with maximum capacity and a first-in-first-out (FIFO) structure to better align with the current policy's capabilities during RL training,
thereby providing higher-quality options.

\begin{figure}[!t]
	\centering
	\includegraphics[width=0.85\linewidth]{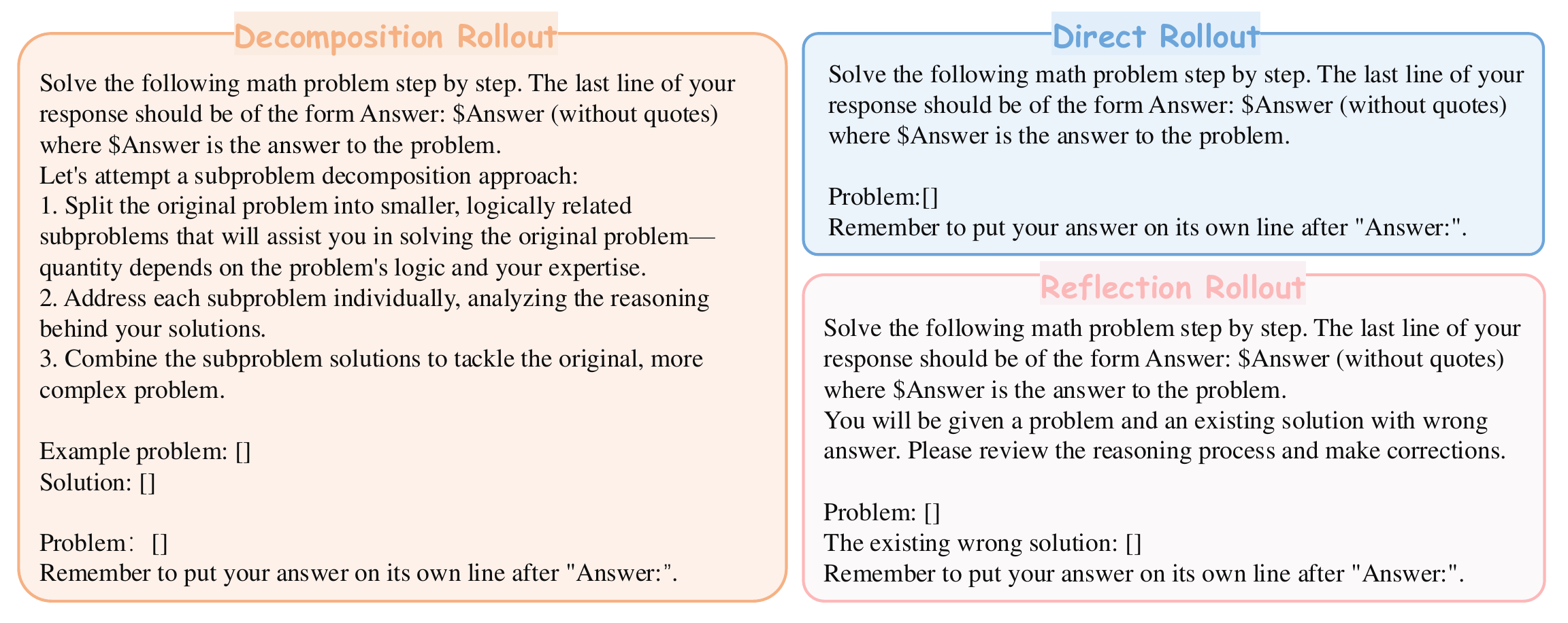}
	\caption{Different prompt templates of our \methodabb{} during whole rollout stage.}
	\label{fig:enter-prompt}
    \vspace{-0.6cm}
\end{figure}
\subsection{Reflection Rollout}\label{sec:reflection}
In our \methodabb{}, we apply reflection rollout in the second stage with the problems that is zero-accuracy filtered by the decomposition stage. 
It incentivizes the policy to revise the answer with the previous wrong answer as the metacognition, which is called the reflection rollout. 

Specifically, given a problem $Q$, its corresponding decomposition and solution steps $ \{(q_i, a_i)\}_{i=1}^n$, and the final wrong answer $A$, our \methodabb{} prompts policy to systematically re-evaluate and correct the reasoning process:
\begin{align}
    (Q, \{(q_i',a_i')\}_{i=1}^n,A') =Reflect(\pi_\theta, Q, \{(q_i,a_i)\}_{i=1}^n,A)
\end{align}
where $A'$ and $(q',a')$ are the answer and solution steps generated by the reflection rollout, we aim to enable the policy to conduct fine-grained reflection on the entire reasoning process, which involves two key aspects: (1) revising the sub-questions $\{q_i\}_{i=1}^n$ and (2) correcting the resolutions $\{a_i\}_{i=1}^n$ to these sub-questions. Both inadequate problem decomposition and erroneous sub-question responses can hinder the generation for correct answer, necessitating meticulous refinement. 


\subsection{Policy Training}\label{sec:policy_training}
Following  direct rollout and above two rollout stages, all samples with accuracy scores between 0 and 1 are collected into the replay buffer $\mathcal{B}$ for policy training. However, these samples present a critical inconsistency: the prompt templates differ between training and testing phases. During rollout stage of RL training, three distinct prompt templates are employed, while testing utilizes only the direct rollout template. 

To alleviate this and inject the new reasoning patterns into policy training, our \methodabb{} modifies the token-level policy gradient loss in Eq.~(\ref{eq:dapoloss}) by incorporating clip-higher regularization.
Furthermore, to integrate the two new reasoning patterns introduced during policy rollout, we implement Supervised Fine-Tuning (SFT) \citep{chu2025sft} alongside RL training. This hybrid approach specifically targets correct problems generated through decomposition and reflection rollout, systematically transferring these reasoning capabilities into the policy's direct response generation. Specifically, we incorporate the following additional loss function,
\begin{align}
\mathcal{L}_{\mathrm{SFT}}({\theta}) = -\underset{
  \substack{
    (Q,\{(q_i,a_i)\}_{i=1}^k, A, R)\sim\mathcal{B} \\ 
    Q\in \{\text{Decompose}, \text{Reflect}\} \& R=1
  }
}{\mathbb{E}}\Big[\log \pi_{\theta}\big((\{(q_i,a_i)\}_{i=1}^k, A)\mid Q\big)\Big],
\end{align}
Specially, we replace the prompt template of $Q\in \{\text{Decompose}, \text{Reflect}\} \& R=1$ into the direct rollout template to keep the training testing consistency. 

Ultimately, we obtain the final loss function of \methodabb{} as follows:
\begin{equation}
\mathcal{L}_{\text{\methodabb{}}}({\theta}) = \mathcal{L}_{\mathrm{DAPO}}({\theta}) + \lambda\mathcal{L}_{\mathrm{SFT}}({\theta}). \label{eq:obj}
\end{equation}
where $\lambda$ is the hyperparameter to control the trade-off between RL and SFT training.

To better understand the effectiveness of our \methodabb{}, we conduct Theorem~\ref{thm:var_order} to analyze the convergence rate of three different rollout stages, Direct Rollout (DR), Decomposition Rollout (DecR) and Reflection Rollout (RefR), respectively. 
\begin{theorem}[Convergence Rate across Stages]
\label{thm:var_order}
Let $m\in\{\emph{DR},\emph{DecR},\emph{RefR}\}$ index the rollout stages of Cog-Rethinker.
Assume (i) horizon $H$ is finite and rewards $r_m(\cdot,\cdot)\in [0,1]$; (ii) for DecR, the problem is decomposed into sub-problems of horizon $H'\!<\!H$;
(iii) for RefR, reflection is performed on a sub-tree of horizon $H''\!\le \!\gamma H'$ with $\gamma\!\in\!(0,1)$.
Then the policy-gradient estimator,
\[
g_m(\theta)=\sum_{t=0}^{H_m-1}\nabla_\theta\log\pi_\theta(a_t|q_t)\,G_{t,k},\quad G_{t,m}=\sum_{u=t}^{H_m-1}r_m(q_u,a_u)
\]
satisfies
\[
\emph{Var}(g_{\emph{RefR}})\;\le\;\gamma(1-\eta)\,\emph{Var}(g_{\emph{DecR}})\;<\;\emph{Var}(g_{\emph{DecR}})\;<\;\emph{Var}(g_{\emph{DR}}),
\]
where $\eta\!\in\!(0,1)$ is the variance-reduction factor induced by importance-sampling the error sub-tree, $\emph{Var}(\cdot)$ represents the variance
Consequently, for target accuracy $\epsilon>0$,
\[
T_{\emph{RefR}}(\epsilon)\;<\;T_{\emph{DecR}}(\epsilon)\;<\;T_{\emph{DR}}(\epsilon).
\]
where $T(\cdot)$ represents the iteration complexity. 
\end{theorem}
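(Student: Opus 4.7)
The plan is to establish the variance chain first and then convert it into an iteration-complexity chain via a standard nonconvex stochastic-gradient complexity result. First I would decompose the policy-gradient estimator into its horizon-summed form and bound each return $G_{t,m}$ using $r_m\in[0,1]$ and horizon length $H_m$, which immediately gives $|G_{t,m}|\le H_m-t$. Combining this with the usual assumption that the score function $\nabla_\theta\log\pi_\theta(a|q)$ has a uniform second-moment bound $B^2$, one obtains a baseline variance estimate of the form
\begin{equation*}
\mathrm{Var}(g_m)\;\le\; B^{2}\sum_{t=0}^{H_m-1}(H_m-t)^{2}\;=\;\Theta(B^{2}H_m^{3}),
\end{equation*}
so that purely through the horizon reduction $H_{\mathrm{DecR}}=H'<H=H_{\mathrm{DR}}$ we already get $\mathrm{Var}(g_{\mathrm{DecR}})<\mathrm{Var}(g_{\mathrm{DR}})$. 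To match the exact multiplicative constant $\gamma(1-\eta)$ stated in the theorem rather than a cubic one, I would then sharpen the step-wise bound by using the normalized per-token loss that DAPO already uses in Eq.~(\ref{eq:dapoloss}); with that normalization the horizon scaling becomes linear and the inequality $H''\le\gamma H'$ lifts directly into a factor $\gamma$ in front of $\mathrm{Var}(g_{\mathrm{DecR}})$.

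Next I would handle the reflection-induced factor $(1-\eta)$. The idea is that RefR does not resample the full trajectory but only the erroneous sub-tree identified after the DecR rollout; writing the RefR estimator as an importance-weighted correction
\begin{equation*}
g_{\mathrm{RefR}}(\theta)=\sum_{t\in\mathcal{T}_{\mathrm{err}}}w_t\,\nabla_\theta\log\pi_\theta(a_t|q_t)\,G_{t,\mathrm{RefR}},
\end{equation*}
where $w_t$ is the importance ratio restricted to the error sub-tree $\mathcal{T}_{\mathrm{err}}$, one applies the standard control-variate identity $\mathrm{Var}(X+Y)=\mathrm{Var}(X)+\mathrm{Var}(Y)+2\mathrm{Cov}(X,Y)$ to the decomposition $g_{\mathrm{DecR}}=g_{\mathrm{correct}}+g_{\mathrm{err}}$ and shows that replacing $g_{\mathrm{err}}$ with its reflected, importance-reweighted version strictly reduces the latter summand by a factor $\eta\in(0,1)$ that equals the fraction of the total return variance contained in $\mathcal{T}_{\mathrm{err}}$. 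Combined with the horizon factor $\gamma$ this yields the claimed $\mathrm{Var}(g_{\mathrm{RefR}})\le\gamma(1-\eta)\,\mathrm{Var}(g_{\mathrm{DecR}})$.

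Finally I would translate variances into iteration counts using the standard complexity result for nonconvex SGD with bounded variance: under $L$-smoothness of the surrogate objective and a learning rate tuned to $\sqrt{\varepsilon/(L\sigma^{2})}$, the number of iterations needed to reach an $\varepsilon$-stationary point satisfies
\begin{equation*}
T_m(\varepsilon)\;=\;\mathcal{O}\!\left(\frac{L\,\Delta}{\varepsilon^{2}}+\frac{L\,\Delta\,\mathrm{Var}(g_m)}{\varepsilon^{2}}\right),
\end{equation*}
where $\Delta$ bounds the initial suboptimality. Because this quantity is strictly monotone in $\mathrm{Var}(g_m)$, the variance chain pushes through verbatim to give $T_{\mathrm{RefR}}(\varepsilon)<T_{\mathrm{DecR}}(\varepsilon)<T_{\mathrm{DR}}(\varepsilon)$. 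The main obstacle, in my view, is the second paragraph: the $(1-\eta)$ factor is the only step that is genuinely non-trivial, because one must argue that selective importance sampling on the error sub-tree is variance-reducing rather than variance-inflating. This requires either a careful assumption that the importance weights are bounded (so $\eta$ is well-defined and strictly positive) or a baseline-subtraction argument to guarantee negative covariance with $g_{\mathrm{correct}}$; the remaining horizon-scaling and SGD-to-iteration-complexity steps are essentially routine.
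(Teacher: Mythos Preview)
Your skeleton---horizon-based variance bounds followed by a monotone variance-to-complexity translation---matches the paper's. The cubic scaling $\mathrm{Var}(g_m)=\Theta(B^2H_m^3)$ from bounded rewards and a uniform score second moment is exactly how the paper handles DR and DecR (it writes the constant as $C=\max_t\mathbb{E}\|\nabla_\theta\log\pi_\theta\|^2$ and gets $\mathrm{Var}(g_{\text{DR}})\le CH^3$, $\mathrm{Var}(g_{\text{DecR}})\le CH^3/k^2$ after splitting into $k$ sub-MDPs of horizon $H/k$).

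Where you diverge is in two places. First, for the $\gamma(1-\eta)$ factor the paper does not invoke the DAPO per-token normalization to linearize the horizon dependence; instead it applies the law of total variance to $G_{t,\text{RefR}}$ conditioned on the error sub-tree, bounding the inner-variance term by $\gamma(H')^2$ (from $H''\le\gamma H'$) and the outer-variance term by $(1-\eta)\mathrm{Var}_t(G_{t,\text{DecR}})$ via what it calls a ``negative-curriculum'' effect. Your control-variate decomposition $g_{\text{DecR}}=g_{\text{correct}}+g_{\text{err}}$ is a more explicit way to isolate the same mechanism and arguably makes the origin of $\eta$ clearer, whereas the paper's law-of-total-variance route avoids your detour through normalization but is looser about how the two additive terms combine into the stated multiplicative bound. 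Second, for the complexity step the paper does \emph{not} use the generic nonconvex $\mathcal{O}(L\Delta\sigma^2/\epsilon^2)$ rate; it invokes a Polyak--\L{}ojasiewicz condition (via a separate lemma) to obtain $\mathbb{E}\|\theta_T-\theta^*\|^2\le C'\mathrm{Var}(g_m)/T$ and hence $T_m(\epsilon)\le C'\mathrm{Var}(g_m)/\epsilon$. Your nonconvex route is more general (no PL needed) but gives a weaker $1/\epsilon^2$ rate; either way the ordering in $\mathrm{Var}(g_m)$ transfers, so the conclusion is the same. Your identification of the $(1-\eta)$ step as the only genuinely delicate point is accurate---the paper is also essentially declarative there.
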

We provide the related proof in Appendix~\ref{app:proof}. Thus, our \methodabb{} achieves better convergence than the direct rollout method given the same number of rollouts on negative samples. Moreover, to better understand the training procedure of our \methodabb{}, we present the pseudo code in Algorithm~\ref{alg:alg1}. 
\begin{algorithm}[!t]
\caption{The training pipeline of our \methodabb{}.}
\label{alg:alg1}
\begin{algorithmic}[1]
    \Require Initial policy $\pi_\theta$; reward model $R$; problems data $\mathcal{D}$; hyperparameters $\varepsilon_{\text{low}}$, $\varepsilon_{\text{high}}$, $\lambda$
    \For{step = $1,\ldots,M$}
        \State Sample a batch $\mathcal{D}_b$ from $\mathcal{D}$
        \State Update the old policy model $\pi_{\theta_{old}} \leftarrow \pi_\theta$
        \State Sample $G$ outputs $\{o_i\}_{i=1}^G \sim \pi_{\theta_{\text{old}}}(\cdot \mid q)$ for each problem $q \in \mathcal{D}_b$ using normal prompt template
        \State Compute rewards $\{r_i\}_{i=1}^G$ for each $o_i$ by running $R$
        \State Calculate accuracy rate $\gamma$ for each problem $q$
        \State Filter out $o_i$ and add remaining to dynamic sampling buffer
        \If{buffer size $n_b < N$}
            \For{each $q$ with $\gamma=0$}
                \State Sample $G$ outputs $\{o_i\}_{i=1}^G \sim \pi_{\theta_{\text{old}}}(\cdot \mid q)$ using decomposition prompt template
                \State Repeat lines 5-7
            \EndFor
        \EndIf
        \If{buffer size $n_b < N$}
            \For{each $q$ with $\gamma=0$}
                \State Sample $G$ outputs $\{o_i\}_{i=1}^G \sim \pi_{\theta_{\text{old}}}(\cdot \mid q)$ using reflection prompt template
                \State Repeat lines 5-7
            \EndFor
        \EndIf
        \If{buffer size $n_b < N$}
            \State \textbf{continue}
        \EndIf
        \State For each $o_i$ in buffer, compute $\hat{A}_{i,t}$ for $t$-th token of $o_i$
        \For{iteration $=1,\ldots,\mu$}
            \State Update $\pi_\theta$ by minimizing objective in Eq.~(\ref{eq:obj})
        \EndFor
    \EndFor
    \Ensure Optimized policy $\pi_\theta$
\end{algorithmic}
\end{algorithm}

\section{Experiments}\label{sec:exp}
\begin{table}[!t]
    \centering
 \caption{Overall accuracy performance on various reasoning benchmarks. The best and second best results are in \textbf{bold} and \underline{underlined}.}
 \tabcolsep=0.06cm
\resizebox{\textwidth}{!}{
\begin{tabular}{cccccccccc}
\toprule 
\textbf{Method} & GSM8K & MATH-500 & AIME 24 & AIME 25 & AMC 2023 &Gaokao 2023en & Minerva & GPQA-diamond & Olympiad  \\
\midrule \rowcolor{gray!30} Qwen2.5-1.5B-Base & 37.98 & 21.60 & 3.33 & 0.00 & 15.00 & 14.81 & 4.04 &  5.65 & 5.33 \\
\midrule
PPO & 67.78 & 41.20 & 0.00 & 0.00 & 28.50 & 38.81 & 15.44 & 22.31 & 17.67  \\
GRPO & 69.67 & 42.00 & \textbf{6.67} & 0.00 & 32.50 & 37.92 & 15.44 & 20.11 & 16.00 \\
Reinforce++ & 63.15 & 44.20 & 0.00 & 0.00 & 30.00 & 31.43 & 14.71 & \underline{24.36} & 18.07  \\
BODF & 74.07 & 51.20 & \underline{3.33} & 0.00 & \underline{42.50} & \underline{44.16} & 15.07 & 20.07 & \underline{18.96}  \\
DAPO & \textbf{77.56}& \underline{56.00}& \textbf{6.67} & 0.00 & \textbf{47.50} & 42.34 & \underline{16.54} & 19.53 & \textbf{22.22}  \\
\midrule
\rowcolor{orange!30!white} \textbf{\methodabb{}} & \underline{77.51} & \textbf{59.00} & \textbf{6.67} & \textbf{6.67} & \textbf{47.50} & \textbf{44.94} & \textbf{17.65} & \textbf{24.40} & \textbf{22.22}  \\
\midrule
\rowcolor{gray!30} Qwen2.5-7B-Base & 58.91 & 41.60 & 6.67 & 0.00 & 52.50 & 29.87 & 11.40 & 18.55& 14.67 \\
\midrule
PPO  & 90.55 & 76.40 & 20.00 & 16.67 & 70.00 & 61.82 & 31.62 & 23.85& 40.78  \\
GRPO & 92.12 & 78.40 & \textbf{26.67} & 20.00 & \underline{72.50} & 61.56 & \textbf{33.09} & 33.24 & 41.48 \\
Reinforce++ & 91.36 & 78.20 & \underline{20.00} & \underline{23.33} & 70.00 & 61.82 & 31.62 & 23.85 & 42.22 \\
BODF & 91.58 & 74.40 & 20.00 & 16.67 & 62.50 & 60.00 & 31.25 & 26.76 & 37.78 \\
DAPO & \underline{92.21} & \underline{79.40} & \textbf{26.67} & \textbf{26.67} & 69.00 & \underline{63.22} & 31.88 & \underline{34.65} & \underline{42.44} \\
\midrule
\rowcolor{red!30!white}\textbf{\methodabb{}} &\textbf{93.32} & \textbf{80.60} & \textbf{26.67} & \textbf{26.67} & \textbf{73.50} & \textbf{65.52} & \underline{32.98} & \textbf{36.42} & \textbf{44.22} \\
\bottomrule
\end{tabular}}
\vspace{-0.3cm}
    \label{tab:overall}
\end{table}
\begin{table}[!t]
    \centering
 \caption{Ablation study on various mathematical reasoning benchmarks. The best and second best results are in \textbf{bold} and \underline{underlined}.}\label{tab:ablation}
 \tabcolsep=0.06cm
\resizebox{\textwidth}{!}{
\begin{tabular}{ccccccccccc}
\toprule 
\textbf{Method} & GSM8K & MATH-500 & AIME 24 & AIME 25 & AMC 2023 &Gaokao 2023en & Minerva & GPQA-diamond & Olympiad  \\
\midrule 
\rowcolor{orange!30!white} \textbf{\methodabb{}-1.5B} & \textbf{77.51} & \textbf{59.00} & \textbf{6.67} & \textbf{6.67} & \underline{47.50} & \textbf{44.94} & \textbf{17.65} & \textbf{24.40} & \underline{22.22}   \\
\rowcolor{orange!25!white}\methodabb{} w/o SFT & 72.25 & 51.40 & \underline{3.33} & 0.00 & \textbf{50.00} & 37.66 & 15.81 & \underline{23.86} & 20.89  \\
\rowcolor{orange!20!white}\methodabb{} w/o MB & \underline{75.89} & \underline{55.60} & \underline{3.33} & 0.00 & \textbf{50.00} & \underline{43.90} & 14.71 & \underline{23.86} & \textbf{23.26}  \\
\rowcolor{orange!15!white}\methodabb{} w/o RefR & 74.45 & 54.80 & \underline{3.33} & \underline{3.33} & 42.50 & 41.82 & \underline{17.28} & 19.09 & 18.67  \\
\midrule
\rowcolor{red!30!white}\textbf{\methodabb{}-7B} & \textbf{93.32} & \textbf{80.60} & \textbf{26.67} & \textbf{26.67} & \textbf{73.50} & \textbf{65.52} & \textbf{32.98} & \textbf{36.42} & \textbf{44.22}  \\
\rowcolor{red!25!white}\methodabb{} w/o SFT & 91.66 & 77.00 & 16.67 & \underline{16.67} & \underline{70.50} & \underline{61.04} & 29.04 & 31.43 & 39.41 \\
\rowcolor{red!20!white}\methodabb{} w/o MB & \underline{92.34} & 78.00 & \underline{20.00} & 13.33 & 65.00 & 60.00 & 30.88 & 30.88 & 38.67  \\
\rowcolor{red!15!white}\methodabb{} w/o RefR & 92.12 & \underline{80.40} & \underline{20.00} & 10.00 & 65.00 & 59.48 & \underline{31.62} & \underline{31.44} & \underline{42.07} \\
\bottomrule
\end{tabular}}
    \label{tab:ablation}
    \vspace{-0.5cm}
\end{table}
In this section, we present comprehensive experimental results and analysis of our \methodabb{} against other baselines. Our experiments focus on the following research questions:
\begin{itemize}[leftmargin=*]
    \item \textbf{RQ1:} Can our \methodabb{} outperforms all the baseline method across various benchmarks?
    \item \textbf{RQ2:} How each part of our \methodabb{} affects the model performance?
    \item \textbf{RQ3:} Can our \methodabb{} improves the  sample efficiency during training?
\end{itemize}

\textbf{Training Details.} 
We initialize both our policy and critic networks with Qwen-2.5-base models (1.5B and 7B)~\citep{yang2024qwen2}, where value head is random initialized from $\mathcal{U}(-\sqrt{5}, \sqrt{5})$ with no bias term. For policy  networks, we employ AdamW optimizer with $\beta=[0.9,0.95]$ without weight decay. The learning rate is set to $1 \times 10^{-6}$ for the policy. The learning rate scheduler are both constant learning rate with linear warm-up of 50 optimizer steps. We employ sample packing during training.
We use orz-math-127k as the training dataset~\citep{hu2025open}, also we develop our code based on VeRL~\citep{sheng2024hybridflow}. Each generation step contains 128 unique prompts sampled from the dataset, and generates 64 responses per prompt with temperature and top-p both set to 1.0. To maintain training stability, we keep the size of the replay buffer as 128 unique prompts until it is satisfied with the accuracy filter. 

\textbf{Evaluation Benchmarks.} To evaluate the complex reasoning capabilities, we choose a broad set of challenging reasoning benchmarks, including GSM8K~\citep{cobbe2021training}, MATH500~\citep{hendrycks2021measuring}, AIME 2024 and 2025~\citep{li2024numinamath}, AMC 2023~\citep{li2024numinamath}, Gaokao 2023en~\citep{liao2024mario}, GPQA-diamond~\citep{rein2024gpqa}, Minera~\citep{lewkowycz2022solving} and OlympiadBench~\citep{he2024olympiadbench}. These benchmarks comprehensively evaluate mathematical reasoning capabilities, and they are all competition-level and Olympic-level problems. Moreover, AIME 2024, 2025 and AMC 2023 are highly challenging competition benchmarks, the results are through majority voting across 16 runs.

\begin{figure}[!t]
    \centering
    \subfigure[Average response length]{\includegraphics[width=0.41\linewidth]{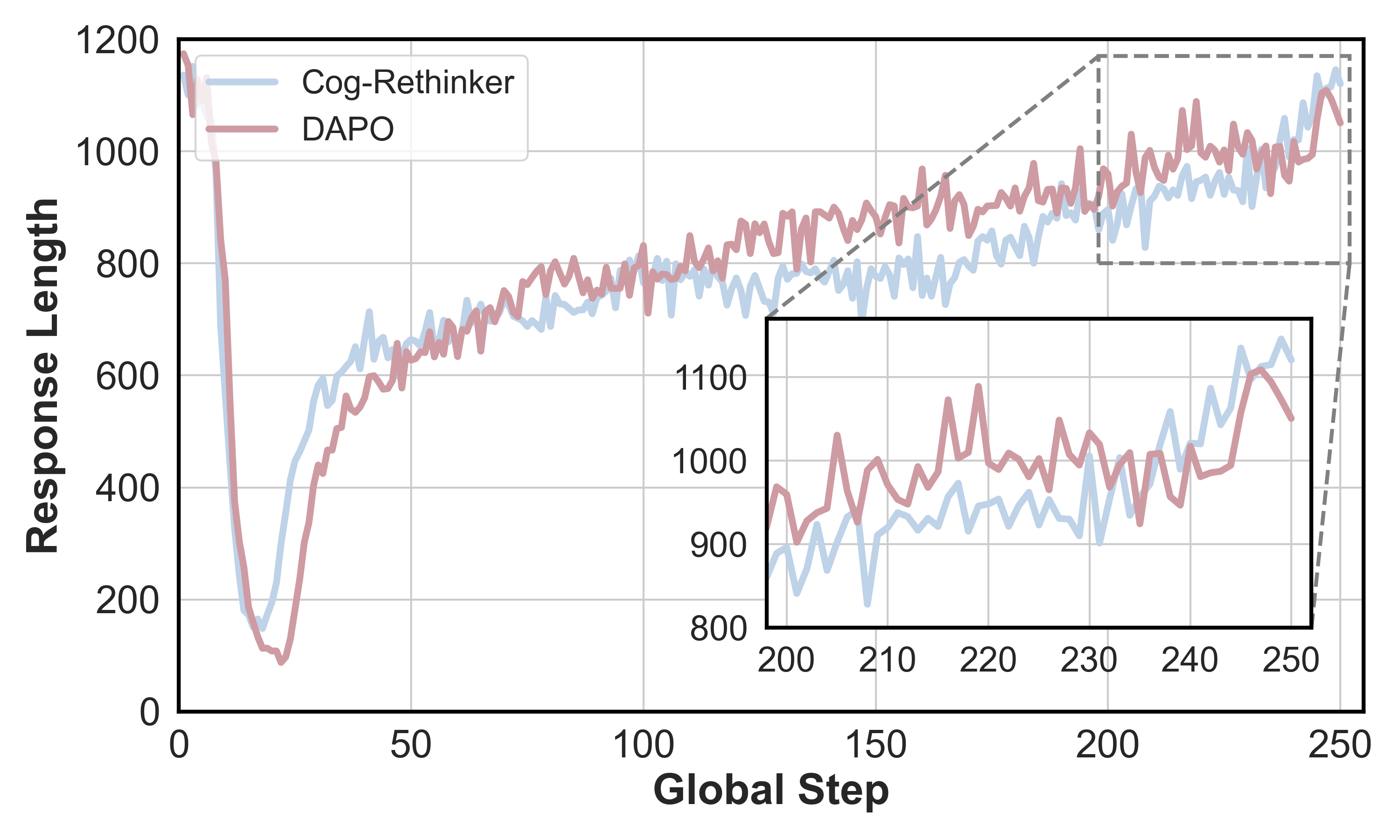}}\quad
    \subfigure[MATH-500 Accuracy avg@16]{\includegraphics[width=0.41\linewidth]{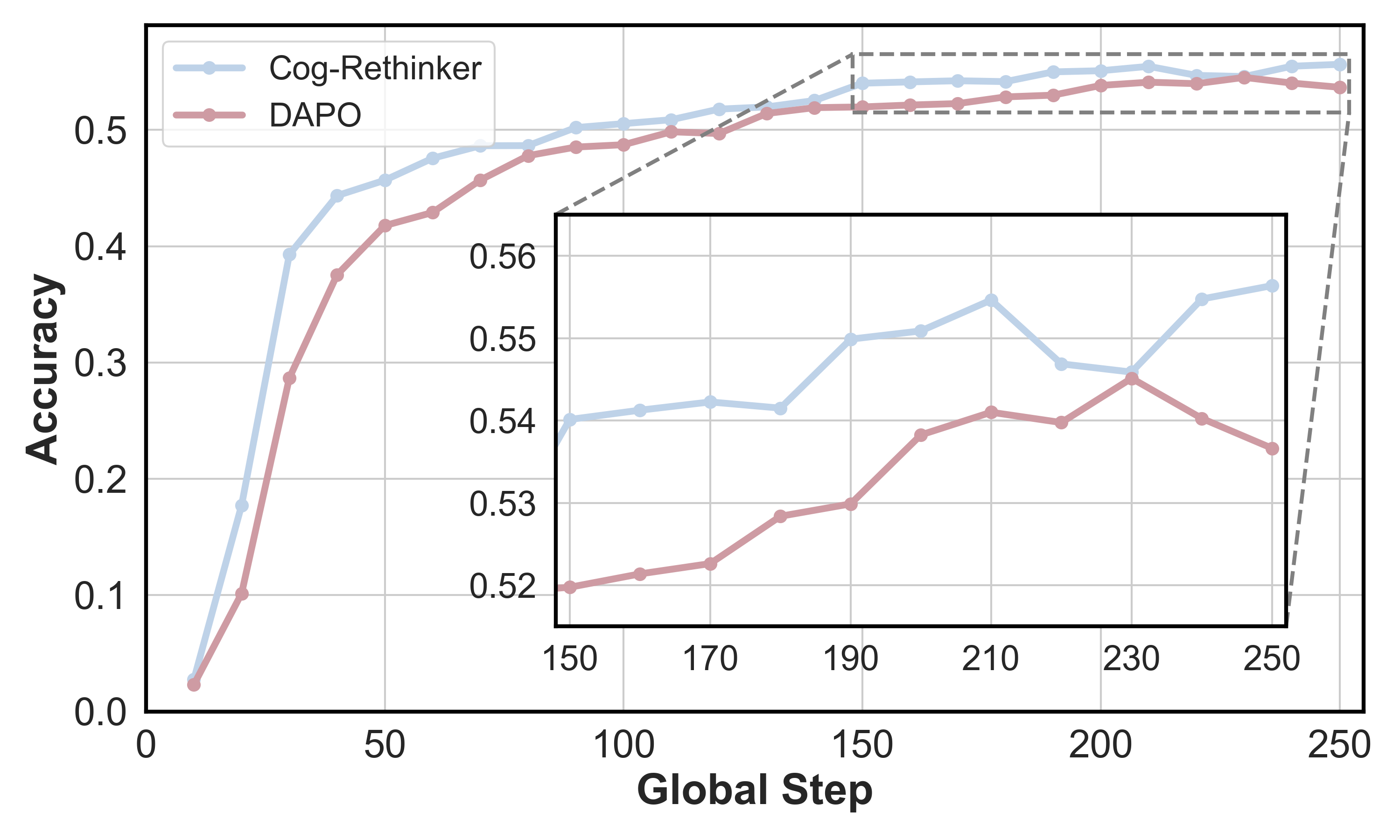}}
    \subfigure[GPQA-Diamond Accuracy avg@16]{\includegraphics[width=0.41\linewidth]{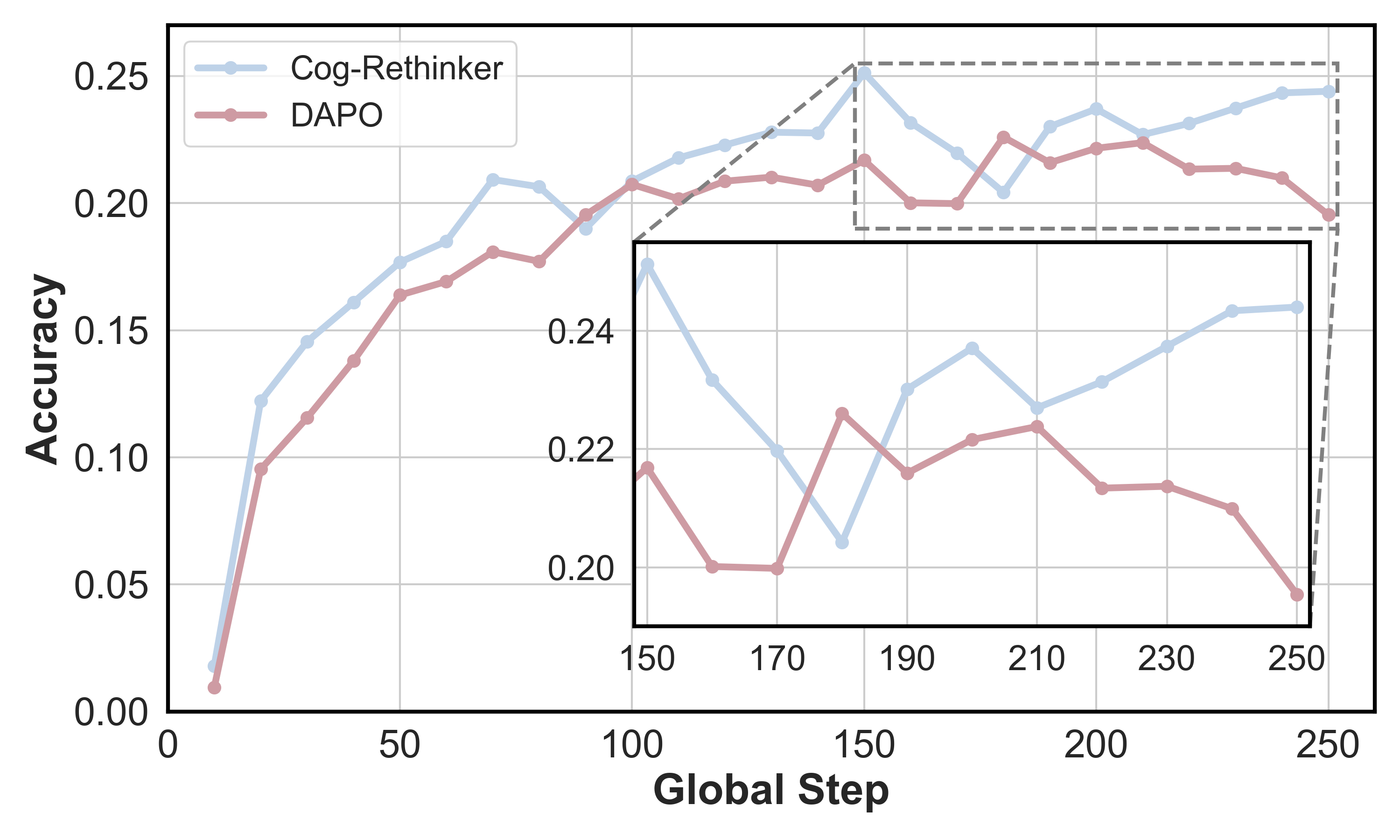}}\quad
    \subfigure[AIME24 Accuracy avg@16]{\includegraphics[width=0.41\linewidth]{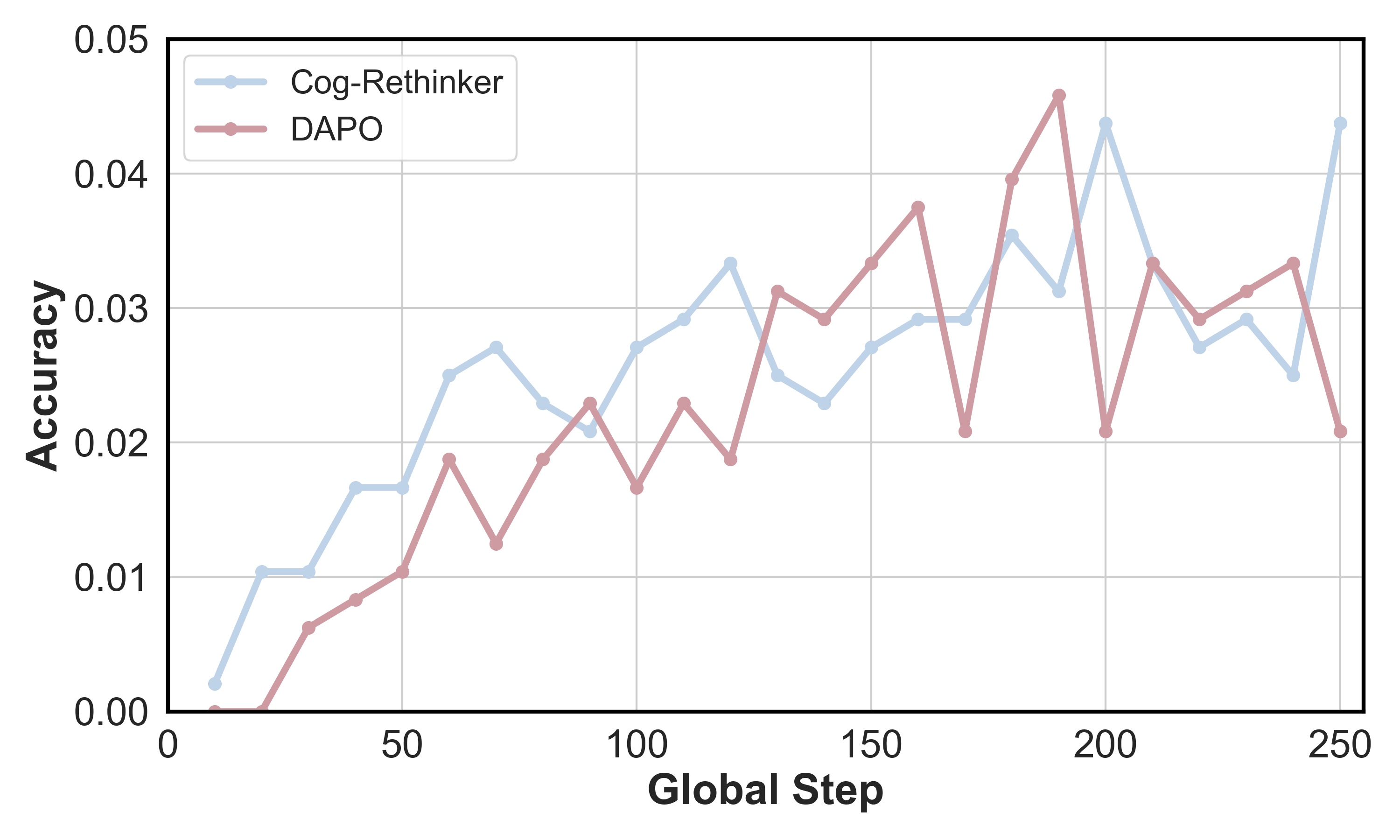}}
    \vspace{-0.3cm}
    \caption{Training comparison between our \methodabb{} and DAPO.}
    \label{fig:training}
    \vspace{-0.6cm}
\end{figure}
\textbf{Baselines.} To demonstrate the reasoning ability of our \methodabb{}, we compare it with many strong baseline methods: PPO~\citep{schulman2017proximal}, GRPO~\citep{shao2024deepseekmath}, Reinforce++~\citep{hu2025reinforce}, BODF~\citep{bae2025online} and DAPO~\citep{yu2025dapo}. Specifically, PPO, GRPO and Reinforce++ are the commonly used methods for reproducing the O1 and R1-like reasoning models.
BODF is the extension of accuracy filtering-based methods~\citep{yu2025dapo,cui2025process} by designing the balanced filtering with theoretical guarantees. 
DAPO leverages the dynamic sampling to improve the training efficiency and stability. 
Additionally, we choose the accuracy rate of rollout samples between 0.3 and 0.7 in BODF optimization.

\subsection{Overall Performance (RQ1)}

Table~\ref{tab:overall} shows the final results of our \methodabb{} with a comprehensive comparison to SOTA reasoning methods. We find that
our \methodabb{} consistently outperforms the baselines on most challenging
mathematical benchmarks across the 1.5B and 7B size base models. More specifically, over the results of 1.5B model, our \methodabb{} achieves highest score in MATH-500, surpassing the nearest competitor DAPO by 3.00\%, and demonstrates exceptional adaptability in AIME 24 and AMC 2023, outperforming all baselines. Notably, Cog-Rethinker uniquely solves AIME 25 where all other methods score 0.00\%, highlighting its capacity for highly challenging tasks. While narrowly trailing DAPO in GSM8K.
Over the results of 7B models, our \methodabb{} stands out as the top-performing method, achieving the highest scores in most datasets. It leads with 93.32\% on GSM8K, 80.60\% on MATH-500, 26.67\% on both AIME 24 and 25, 73.50\% on AMC 2023, 65.52\% on Gaokao 2023en, 36.42\% on GPQA-diamond, and 44.22\% on Olympiad, demonstrating consistent superiority. Other methods like GRPO, DAPO, and Reinforce++ show competitive results but fall short of our \methodabb{}'s performance. For instance, DAPO scores 92.21\% on GSM8K and 26.67\% on AIME 25, while GRPO achieves 78.40\% on MATH-500, both trailing behind our \methodabb{}. The base model, Qwen2.5-7B-Base, performs the weakest, highlighting the significant improvements brought by advanced techniques. Our \methodabb{}'s dominance across diverse and complex tasks underscores its effectiveness in tackling challenging problems.

\vspace{-0.2cm}
\subsection{Ablation Study (RQ2)}
In this section, we conduct experiments to verify the effectiveness of each part in our \methodabb{}. 
Specially, we sequentially remove the SFT for correct sample of decomposition and reflection, metacognitive buffer of decomposition rollout (MB) and reflection rollout (RefR) to test the newly trained policies. 
We make three variants (\methodabb{} w/o SFT, \methodabb{} w/o MB, \methodabb{} w/o RefR), the results are shown in Table~\ref{tab:ablation}. 
From the results in Table~\ref{tab:ablation} on both 1.5B and 7B models, we can see that, while removing any component degrades results. SFT removal causes the steepest decline, with GSM8K dropping to 72.25\% of 1.5B model, underscoring its role in knowledge injection for the base model to cold start. 
Ablating MB reduces consistency, causing MATH-500 falling to 78.00\% of the 7B model's performance, highlighting its importance for the decomposition rollout. 
The removal of RefR weakens performance, with Olympiad scores dropping to 18.67 for the 1.5B model, proving its importance in optimizing complex tasks. The 1.5B variant's significantly weaker performance confirms the advantages of scale, shows that our \methodabb{} improve the training of weaker models.

\subsection{Training Efficiency (RQ3)}
In this section, we visualize the training procedure of our \methodabb{} compared with DAPO to demonstrate its effectiveness.
Figure~\ref{fig:training}(a) shows that \methodabb{} achieves shorter stabilized response lengths, indicating more efficient output refinement.
Figures~\ref{fig:training}(b) and (c) reveal that our method maintains consistent performance advantages over DAPO, suggesting superior convergence properties.
Finally, Figure~\ref{fig:training}(d) demonstrates that \methodabb{} continuously improves performance on challenging tasks, being competitive with DAPO throughout training.
To further analyze sample efficiency, we conduct experiments comparing the relationship between training samples used and final model performance, with results presented in Figure~\ref{fig:sample}.
\begin{figure}[!t]
    \centering
    \subfigure[Generated Batches Over Training Steps]{\includegraphics[width=0.45\linewidth]{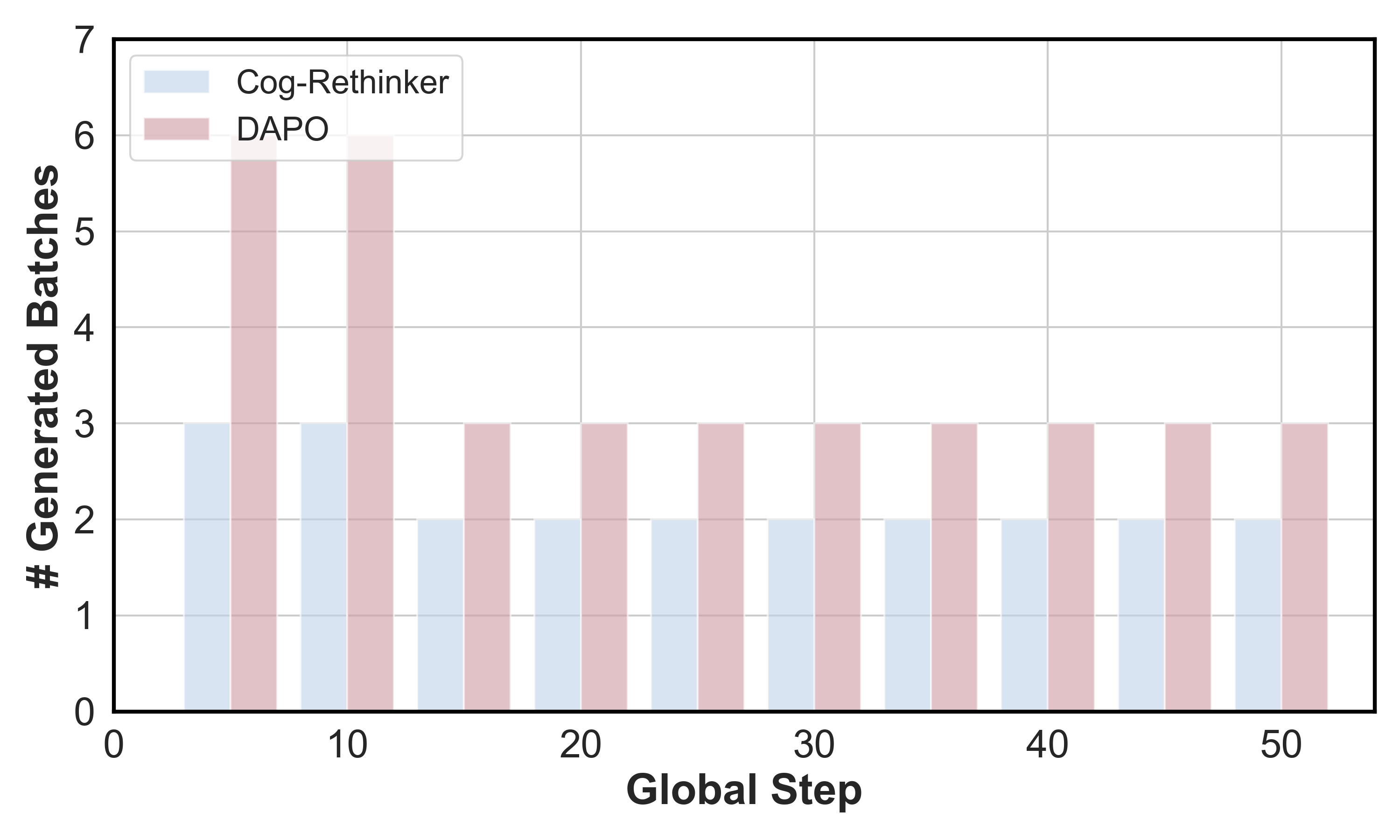}}\quad
    \subfigure[Cumulative Batches vs MATH500 Accuracy]{\includegraphics[width=0.45\linewidth]{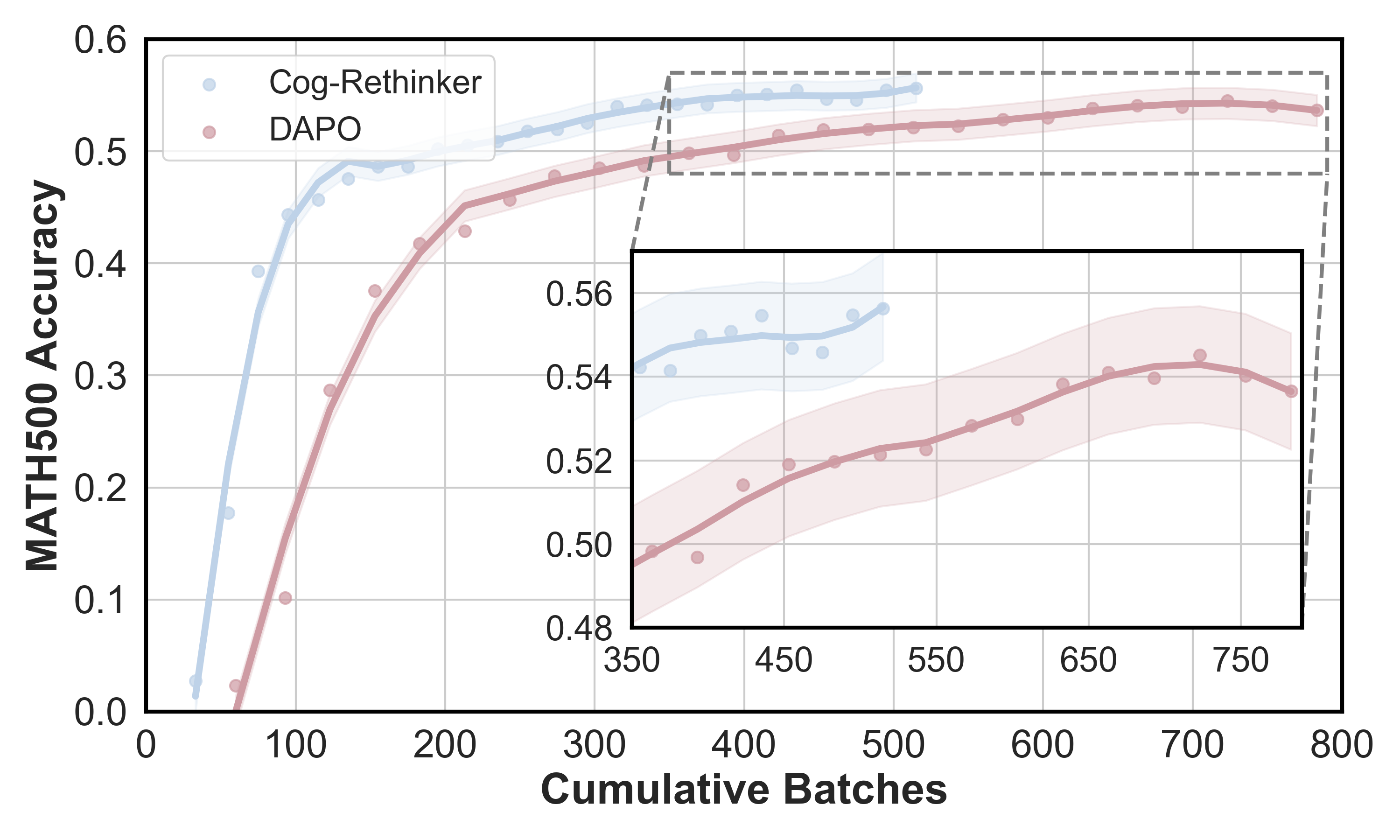}}
    \caption{Sample utilization efficiency analysis between our \methodabb{} and DAPO.}
    \label{fig:sample}
    \vspace{-0.5cm}
\end{figure}
Figure~\ref{fig:sample}(a) demonstrates that our \methodabb{} is capable of obtaining more valid training samples than DAPO, reduces the batch generation overhead before both methods reach stability. Figure~\ref{fig:sample}(b) reveals a positive correlation between cumulative batches and MATH500 accuracy, with \methodabb{} exhibiting superior sample efficiency throughout the training dynamics, which also confirms the analysis in Theorem~\ref{thm:var_order}.
\section{Conclusion}

In this paper, we propose Cog-Rethinker, a hierarchical metacognitive reinforcement learning framework that advances beyond zero-RL through two key mechanisms: (1) hierarchical integration of problem decomposition and reflection in rollout stage to transcend initial cognitive constraints, and (2) adaptive memory for demonstration retrieval of prompt templates and combined with SFT to cold start and keep the train-test consistency. Empirical results show state-of-the-art reasoning performance with faster convergence and reduced sample needs especially on the weak models. Early-stage synergy between decomposition and reflection boosts correct sample generation, overcoming LLMs' initial cognitive limits. This work establishes a paradigm for developing LLMs that acquire advanced reasoning beyond pretraining, offering scalable solutions for complex mathematical tasks.

\bibliography{reference}

\begin{thebibliography}{75}
\providecommand{\natexlab}[1]{#1}
\providecommand{\url}[1]{\texttt{#1}}
\expandafter\ifx\csname urlstyle\endcsname\relax
  \providecommand{\doi}[1]{doi: #1}\else
  \providecommand{\doi}{doi: \begingroup \urlstyle{rm}\Url}\fi

\bibitem[Ackoff(1989)]{ackoff1989data}
Russell~L Ackoff.
\newblock From data to wisdom.
\newblock \emph{Journal of applied systems analysis}, 16\penalty0 (1):\penalty0 3--9, 1989.

\bibitem[Amodei et~al.(2016)Amodei, Olah, Steinhardt, Christiano, Schulman, and Man{\'e}]{amodei2016concrete}
Dario Amodei, Chris Olah, Jacob Steinhardt, Paul Christiano, John Schulman, and Dan Man{\'e}.
\newblock Concrete problems in ai safety.
\newblock \emph{arXiv preprint arXiv:1606.06565}, 2016.

\bibitem[Bae et~al.(2025)Bae, Hong, Lee, Kim, Nam, and Kwak]{bae2025online}
Sanghwan Bae, Jiwoo Hong, Min~Young Lee, Hanbyul Kim, JeongYeon Nam, and Donghyun Kwak.
\newblock Online difficulty filtering for reasoning oriented reinforcement learning.
\newblock \emph{arXiv preprint arXiv:2504.03380}, 2025.

\bibitem[Bai et~al.(2022)Bai, Jones, Ndousse, Askell, Chen, DasSarma, Drain, Fort, Ganguli, Henighan, et~al.]{bai2022training}
Yuntao Bai, Andy Jones, Kamal Ndousse, Amanda Askell, Anna Chen, Nova DasSarma, Dawn Drain, Stanislav Fort, Deep Ganguli, Tom Henighan, et~al.
\newblock Training a helpful and harmless assistant with reinforcement learning from human feedback.
\newblock \emph{arXiv preprint arXiv:2204.05862}, 2022.

\bibitem[Besta et~al.(2024)Besta, Blach, Kubicek, Gerstenberger, Podstawski, Gianinazzi, Gajda, Lehmann, Niewiadomski, Nyczyk, et~al.]{GoT}
Maciej Besta, Nils Blach, Ales Kubicek, Robert Gerstenberger, Michal Podstawski, Lukas Gianinazzi, Joanna Gajda, Tomasz Lehmann, Hubert Niewiadomski, Piotr Nyczyk, et~al.
\newblock Graph of thoughts: Solving elaborate problems with large language models.
\newblock In \emph{Proceedings of the AAAI Conference on Artificial Intelligence}, volume~38, pp.\  17682--17690, 2024.

\bibitem[Choi et~al.(2023)Choi, Fang, Wang, and Song]{choi2023kcts}
Sehyun Choi, Tianqing Fang, Zhaowei Wang, and Yangqiu Song.
\newblock Kcts: knowledge-constrained tree search decoding with token-level hallucination detection.
\newblock \emph{arXiv preprint arXiv:2310.09044}, 2023.

\bibitem[Christiano et~al.(2017)Christiano, Leike, Brown, Martic, Legg, and Amodei]{christiano2017deep}
Paul~F Christiano, Jan Leike, Tom Brown, Miljan Martic, Shane Legg, and Dario Amodei.
\newblock Deep reinforcement learning from human preferences.
\newblock \emph{Advances in neural information processing systems}, 30, 2017.

\bibitem[Chu et~al.(2025)Chu, Zhai, Yang, Tong, Xie, Schuurmans, Le, Levine, and Ma]{chu2025sft}
Tianzhe Chu, Yuexiang Zhai, Jihan Yang, Shengbang Tong, Saining Xie, Dale Schuurmans, Quoc~V Le, Sergey Levine, and Yi~Ma.
\newblock Sft memorizes, rl generalizes: A comparative study of foundation model post-training.
\newblock \emph{arXiv preprint arXiv:2501.17161}, 2025.

\bibitem[Cobbe et~al.(2021)Cobbe, Kosaraju, Bavarian, Chen, Jun, Kaiser, Plappert, Tworek, Hilton, Nakano, et~al.]{cobbe2021training}
Karl Cobbe, Vineet Kosaraju, Mohammad Bavarian, Mark Chen, Heewoo Jun, Lukasz Kaiser, Matthias Plappert, Jerry Tworek, Jacob Hilton, Reiichiro Nakano, et~al.
\newblock Training verifiers to solve math word problems.
\newblock \emph{arXiv preprint arXiv:2110.14168}, 2021.

\bibitem[Cui et~al.(2025)Cui, Yuan, Wang, Wang, Li, He, Fan, Yu, Xu, Chen, et~al.]{cui2025process}
Ganqu Cui, Lifan Yuan, Zefan Wang, Hanbin Wang, Wendi Li, Bingxiang He, Yuchen Fan, Tianyu Yu, Qixin Xu, Weize Chen, et~al.
\newblock Process reinforcement through implicit rewards.
\newblock \emph{arXiv preprint arXiv:2502.01456}, 2025.

\bibitem[Endsley et~al.(2007)Endsley, Hoffman, Kaber, and Roth]{endsley2007cognitive}
Mica~R Endsley, Robert Hoffman, David Kaber, and Emilie Roth.
\newblock Cognitive engineering and decision making: An overview and future course.
\newblock \emph{Journal of Cognitive Engineering and Decision Making}, 1\penalty0 (1):\penalty0 1--21, 2007.

\bibitem[Everitt et~al.(2017)Everitt, Krakovna, Orseau, Hutter, and Legg]{everitt2017reinforcement}
Tom Everitt, Victoria Krakovna, Laurent Orseau, Marcus Hutter, and Shane Legg.
\newblock Reinforcement learning with a corrupted reward channel.
\newblock \emph{arXiv preprint arXiv:1705.08417}, 2017.

\bibitem[Everitt et~al.(2021)Everitt, Hutter, Kumar, and Krakovna]{everitt2021reward}
Tom Everitt, Marcus Hutter, Ramana Kumar, and Victoria Krakovna.
\newblock Reward tampering problems and solutions in reinforcement learning: A causal influence diagram perspective.
\newblock \emph{Synthese}, 198\penalty0 (Suppl 27):\penalty0 6435--6467, 2021.

\bibitem[Gandhi et~al.(2025)Gandhi, Chakravarthy, Singh, Lile, and Goodman]{gandhi2025cognitive}
Kanishk Gandhi, Ayush Chakravarthy, Anikait Singh, Nathan Lile, and Noah~D Goodman.
\newblock Cognitive behaviors that enable self-improving reasoners, or, four habits of highly effective stars.
\newblock \emph{arXiv preprint arXiv:2503.01307}, 2025.

\bibitem[Gao et~al.(2023)Gao, Schulman, and Hilton]{gao2023scaling}
Leo Gao, John Schulman, and Jacob Hilton.
\newblock Scaling laws for reward model overoptimization.
\newblock In \emph{International Conference on Machine Learning}, pp.\  10835--10866. PMLR, 2023.

\bibitem[Google(2024)]{gemini}
Google.
\newblock Introducing gemini 2.0: our new ai model for the agentic era.
\newblock https://blog.google/technology/google-deepmind/google-gemini-ai-update-december-2024/, 2024.

\bibitem[Guo et~al.(2025)Guo, Yang, Zhang, Song, Zhang, Xu, Zhu, Ma, Wang, Bi, et~al.]{guo2025deepseek}
Daya Guo, Dejian Yang, Haowei Zhang, Junxiao Song, Ruoyu Zhang, Runxin Xu, Qihao Zhu, Shirong Ma, Peiyi Wang, Xiao Bi, et~al.
\newblock Deepseek-r1: Incentivizing reasoning capability in llms via reinforcement learning.
\newblock \emph{arXiv preprint arXiv:2501.12948}, 2025.

\bibitem[He et~al.(2024)He, Luo, Bai, Hu, Thai, Shen, Hu, Han, Huang, Zhang, et~al.]{he2024olympiadbench}
Chaoqun He, Renjie Luo, Yuzhuo Bai, Shengding Hu, Zhen~Leng Thai, Junhao Shen, Jinyi Hu, Xu~Han, Yujie Huang, Yuxiang Zhang, et~al.
\newblock Olympiadbench: A challenging benchmark for promoting agi with olympiad-level bilingual multimodal scientific problems.
\newblock \emph{arXiv preprint arXiv:2402.14008}, 2024.

\bibitem[Hendrycks et~al.(2021)Hendrycks, Burns, Kadavath, Arora, Basart, Tang, Song, and Steinhardt]{hendrycks2021measuring}
Dan Hendrycks, Collin Burns, Saurav Kadavath, Akul Arora, Steven Basart, Eric Tang, Dawn Song, and Jacob Steinhardt.
\newblock Measuring mathematical problem solving with the math dataset.
\newblock \emph{arXiv preprint arXiv:2103.03874}, 2021.

\bibitem[Hu(2025)]{hu2025reinforce}
Jian Hu.
\newblock Reinforce++: A simple and efficient approach for aligning large language models.
\newblock \emph{arXiv preprint arXiv:2501.03262}, 2025.

\bibitem[Hu et~al.(2025{\natexlab{a}})Hu, Liu, Xu, and Shen]{hu2025reinforce++}
Jian Hu, Jason~Klein Liu, Haotian Xu, and Wei Shen.
\newblock Reinforce++: An efficient rlhf algorithm with robustness to both prompt and reward models.
\newblock \emph{arXiv preprint arXiv:2501.03262}, 2025{\natexlab{a}}.

\bibitem[Hu et~al.(2025{\natexlab{b}})Hu, Zhang, Han, Jiang, Zhang, and Shum]{hu2025open}
Jingcheng Hu, Yinmin Zhang, Qi~Han, Daxin Jiang, Xiangyu Zhang, and Heung-Yeung Shum.
\newblock Open-reasoner-zero: An open source approach to scaling up reinforcement learning on the base model.
\newblock \emph{arXiv preprint arXiv:2503.24290}, 2025{\natexlab{b}}.

\bibitem[Jaech et~al.(2024)Jaech, Kalai, Lerer, Richardson, El-Kishky, Low, Helyar, Madry, Beutel, Carney, et~al.]{jaech2024openai}
Aaron Jaech, Adam Kalai, Adam Lerer, Adam Richardson, Ahmed El-Kishky, Aiden Low, Alec Helyar, Aleksander Madry, Alex Beutel, Alex Carney, et~al.
\newblock Openai o1 system card.
\newblock \emph{arXiv preprint arXiv:2412.16720}, 2024.

\bibitem[Jain et~al.(2024)Jain, Han, Gu, Li, Yan, Zhang, Wang, Solar-Lezama, Sen, and Stoica]{jain2024livecodebench}
Naman Jain, King Han, Alex Gu, Wen-Ding Li, Fanjia Yan, Tianjun Zhang, Sida Wang, Armando Solar-Lezama, Koushik Sen, and Ion Stoica.
\newblock Livecodebench: Holistic and contamination free evaluation of large language models for code.
\newblock \emph{arXiv preprint arXiv:2403.07974}, 2024.

\bibitem[Jiang et~al.(2022)Jiang, Welleck, Zhou, Li, Liu, Jamnik, Lacroix, Wu, and Lample]{jiang2022draft}
Albert~Q Jiang, Sean Welleck, Jin~Peng Zhou, Wenda Li, Jiacheng Liu, Mateja Jamnik, Timoth{\'e}e Lacroix, Yuhuai Wu, and Guillaume Lample.
\newblock Draft, sketch, and prove: Guiding formal theorem provers with informal proofs.
\newblock \emph{arXiv preprint arXiv:2210.12283}, 2022.

\bibitem[Kazemnejad et~al.(2024)Kazemnejad, Aghajohari, Portelance, Sordoni, Reddy, Courville, and Roux]{kazemnejad2024vineppo}
Amirhossein Kazemnejad, Milad Aghajohari, Eva Portelance, Alessandro Sordoni, Siva Reddy, Aaron Courville, and Nicolas~Le Roux.
\newblock Vineppo: Unlocking rl potential for llm reasoning through refined credit assignment.
\newblock \emph{arXiv preprint arXiv:2410.01679}, 2024.

\bibitem[Lambert et~al.(2024)Lambert, Morrison, Pyatkin, Huang, Ivison, Brahman, Miranda, Liu, Dziri, Lyu, et~al.]{lambert2024t}
Nathan Lambert, Jacob Morrison, Valentina Pyatkin, Shengyi Huang, Hamish Ivison, Faeze Brahman, Lester James~V Miranda, Alisa Liu, Nouha Dziri, Shane Lyu, et~al.
\newblock T$\backslash$" ulu 3: Pushing frontiers in open language model post-training.
\newblock \emph{arXiv preprint arXiv:2411.15124}, 2024.

\bibitem[Landauer et~al.(1997)Landauer, Laham, and Foltz]{landauer1997learning}
Thomas Landauer, Darrell Laham, and Peter Foltz.
\newblock Learning human-like knowledge by singular value decomposition: A progress report.
\newblock \emph{Advances in neural information processing systems}, 10, 1997.

\bibitem[Lewkowycz et~al.(2022)Lewkowycz, Andreassen, Dohan, Dyer, Michalewski, Ramasesh, Slone, Anil, Schlag, Gutman-Solo, et~al.]{lewkowycz2022solving}
Aitor Lewkowycz, Anders Andreassen, David Dohan, Ethan Dyer, Henryk Michalewski, Vinay Ramasesh, Ambrose Slone, Cem Anil, Imanol Schlag, Theo Gutman-Solo, et~al.
\newblock Solving quantitative reasoning problems with language models.
\newblock \emph{Advances in Neural Information Processing Systems}, 35:\penalty0 3843--3857, 2022.

\bibitem[Li et~al.(2024)Li, Beeching, Tunstall, Lipkin, Soletskyi, Huang, Rasul, Yu, Jiang, Shen, et~al.]{li2024numinamath}
Jia Li, Edward Beeching, Lewis Tunstall, Ben Lipkin, Roman Soletskyi, Shengyi Huang, Kashif Rasul, Longhui Yu, Albert~Q Jiang, Ziju Shen, et~al.
\newblock Numinamath: The largest public dataset in ai4maths with 860k pairs of competition math problems and solutions.
\newblock \emph{Hugging Face repository}, 13:\penalty0 9, 2024.

\bibitem[Li et~al.(2023)Li, Lin, Zhang, Fu, Chen, Lou, and Chen]{li2023making}
Yifei Li, Zeqi Lin, Shizhuo Zhang, Qiang Fu, Bei Chen, Jian-Guang Lou, and Weizhu Chen.
\newblock Making language models better reasoners with step-aware verifier.
\newblock In \emph{Proceedings of the 61st Annual Meeting of the Association for Computational Linguistics (Volume 1: Long Papers)}, pp.\  5315--5333, 2023.

\bibitem[Liao et~al.(2024)Liao, Luo, Li, Wu, and Fan]{liao2024mario}
Minpeng Liao, Wei Luo, Chengxi Li, Jing Wu, and Kai Fan.
\newblock Mario: Math reasoning with code interpreter output--a reproducible pipeline.
\newblock \emph{arXiv preprint arXiv:2401.08190}, 2024.

\bibitem[Lightman et~al.(2023)Lightman, Kosaraju, Burda, Edwards, Baker, Lee, Leike, Schulman, Sutskever, and Cobbe]{lightman2023let}
Hunter Lightman, Vineet Kosaraju, Yura Burda, Harri Edwards, Bowen Baker, Teddy Lee, Jan Leike, John Schulman, Ilya Sutskever, and Karl Cobbe.
\newblock Let's verify step by step.
\newblock \emph{arXiv preprint arXiv:2305.20050}, 2023.

\bibitem[Lin et~al.(2025)Lin, Lin, Xie, and Ji]{lin2025cppo}
Zhihang Lin, Mingbao Lin, Yuan Xie, and Rongrong Ji.
\newblock Cppo: Accelerating the training of group relative policy optimization-based reasoning models.
\newblock \emph{arXiv preprint arXiv:2503.22342}, 2025.

\bibitem[Liu et~al.(2024)Liu, Cohen, Pasunuru, Choi, Hajishirzi, and Celikyilmaz]{liu2024don}
Jiacheng Liu, Andrew Cohen, Ramakanth Pasunuru, Yejin Choi, Hannaneh Hajishirzi, and Asli Celikyilmaz.
\newblock Don't throw away your value model! generating more preferable text with value-guided monte-carlo tree search decoding.
\newblock In \emph{First Conference on Language Modeling}, 2024.

\bibitem[Liu et~al.(2025)Liu, Chen, Li, Qi, Pang, Du, Lee, and Lin]{liu2025understanding}
Zichen Liu, Changyu Chen, Wenjun Li, Penghui Qi, Tianyu Pang, Chao Du, Wee~Sun Lee, and Min Lin.
\newblock Understanding r1-zero-like training: A critical perspective.
\newblock \emph{arXiv preprint arXiv:2503.20783}, 2025.

\bibitem[{Meta AI Team}(2024)]{metallama2024}
{Meta AI Team}.
\newblock Llama 3.2: Revolutionizing edge {AI} and vision with open, customizable models.
\newblock \url{https://ai.meta.com/blog/llama-3-2-connect-2024-vision-edge-mobile-devices/}, May 2024.

\bibitem[OpenAI et~al.(2024)OpenAI, :, Jaech, Kalai, Lerer, Richardson, El-Kishky, Low, Helyar, Madry, Beutel, Carney, Iftimie, Karpenko, Passos, Neitz, Prokofiev, Wei, Tam, Bennett, Kumar, Saraiva, Vallone, Duberstein, Kondrich, Mishchenko, Applebaum, Jiang, Nair, Zoph, Ghorbani, Rossen, Sokolowsky, Barak, McGrew, Minaiev, Hao, Baker, Houghton, McKinzie, Eastman, Lugaresi, Bassin, Hudson, Li, de~Bourcy, Voss, Shen, Zhang, Koch, Orsinger, Hesse, Fischer, Chan, Roberts, Kappler, Levy, Selsam, Dohan, Farhi, Mely, Robinson, Tsipras, Li, Oprica, Freeman, Zhang, Wong, Proehl, Cheung, Mitchell, Wallace, Ritter, Mays, Wang, Such, Raso, Leoni, Tsimpourlas, Song, von Lohmann, Sulit, Salmon, Parascandolo, Chabot, Zhao, Brockman, Leclerc, Salman, Bao, Sheng, Andrin, Bagherinezhad, Ren, Lightman, Chung, Kivlichan, O'Connell, Osband, Gilaberte, Akkaya, Kostrikov, Sutskever, Kofman, Pachocki, Lennon, Wei, Harb, Twore, Feng, Yu, Weng, Tang, Yu, Candela, Palermo, Parish, Heidecke, Hallman, Rizzo, Gordon, Uesato, Ward,
  Huizinga, Wang, Chen, Xiao, Singhal, Nguyen, Cobbe, Shi, Wood, Rimbach, Gu-Lemberg, Liu, Lu, Stone, Yu, Ahmad, Yang, Liu, Maksin, Ho, Fedus, Weng, Li, McCallum, Held, Kuhn, Kondraciuk, Kaiser, Metz, Boyd, Trebacz, Joglekar, Chen, Tintor, Meyer, Jones, Kaufer, Schwarzer, Shah, Yatbaz, Guan, Xu, Yan, Glaese, Chen, Lampe, Malek, Wang, Fradin, McClay, Pavlov, Wang, Wang, Murati, Bavarian, Rohaninejad, McAleese, Chowdhury, Chowdhury, Ryder, Tezak, Brown, Nachum, Boiko, Murk, Watkins, Chao, Ashbourne, Izmailov, Zhokhov, Dias, Arora, Lin, Lopes, Gaon, Miyara, Leike, Hwang, Garg, Brown, James, Shu, Cheu, Greene, Jain, Altman, Toizer, Toyer, Miserendino, Agarwal, Hernandez, Baker, McKinney, Yan, Zhao, Hu, Santurkar, Chaudhuri, Zhang, Fu, Papay, Lin, Balaji, Sanjeev, Sidor, Broda, Clark, Wang, Gordon, Sanders, Patwardhan, Sottiaux, Degry, Dimson, Zheng, Garipov, Stasi, Bansal, Creech, Peterson, Eloundou, Qi, Kosaraju, Monaco, Pong, Fomenko, Zheng, Zhou, McCabe, Zaremba, Dubois, Lu, Chen, Cha, Bai, He, Zhang, Wang,
  Shao, and Li]{openai2024openaio1card}
OpenAI, :, Aaron Jaech, Adam Kalai, Adam Lerer, Adam Richardson, Ahmed El-Kishky, Aiden Low, Alec Helyar, Aleksander Madry, Alex Beutel, Alex Carney, Alex Iftimie, Alex Karpenko, Alex~Tachard Passos, Alexander Neitz, Alexander Prokofiev, Alexander Wei, Allison Tam, Ally Bennett, Ananya Kumar, Andre Saraiva, Andrea Vallone, Andrew Duberstein, Andrew Kondrich, Andrey Mishchenko, Andy Applebaum, Angela Jiang, Ashvin Nair, Barret Zoph, Behrooz Ghorbani, Ben Rossen, Benjamin Sokolowsky, Boaz Barak, Bob McGrew, Borys Minaiev, Botao Hao, Bowen Baker, Brandon Houghton, Brandon McKinzie, Brydon Eastman, Camillo Lugaresi, Cary Bassin, Cary Hudson, Chak~Ming Li, Charles de~Bourcy, Chelsea Voss, Chen Shen, Chong Zhang, Chris Koch, Chris Orsinger, Christopher Hesse, Claudia Fischer, Clive Chan, Dan Roberts, Daniel Kappler, Daniel Levy, Daniel Selsam, David Dohan, David Farhi, David Mely, David Robinson, Dimitris Tsipras, Doug Li, Dragos Oprica, Eben Freeman, Eddie Zhang, Edmund Wong, Elizabeth Proehl, Enoch Cheung, Eric
  Mitchell, Eric Wallace, Erik Ritter, Evan Mays, Fan Wang, Felipe~Petroski Such, Filippo Raso, Florencia Leoni, Foivos Tsimpourlas, Francis Song, Fred von Lohmann, Freddie Sulit, Geoff Salmon, Giambattista Parascandolo, Gildas Chabot, Grace Zhao, Greg Brockman, Guillaume Leclerc, Hadi Salman, Haiming Bao, Hao Sheng, Hart Andrin, Hessam Bagherinezhad, Hongyu Ren, Hunter Lightman, Hyung~Won Chung, Ian Kivlichan, Ian O'Connell, Ian Osband, Ignasi~Clavera Gilaberte, Ilge Akkaya, Ilya Kostrikov, Ilya Sutskever, Irina Kofman, Jakub Pachocki, James Lennon, Jason Wei, Jean Harb, Jerry Twore, Jiacheng Feng, Jiahui Yu, Jiayi Weng, Jie Tang, Jieqi Yu, Joaquin~Quiñonero Candela, Joe Palermo, Joel Parish, Johannes Heidecke, John Hallman, John Rizzo, Jonathan Gordon, Jonathan Uesato, Jonathan Ward, Joost Huizinga, Julie Wang, Kai Chen, Kai Xiao, Karan Singhal, Karina Nguyen, Karl Cobbe, Katy Shi, Kayla Wood, Kendra Rimbach, Keren Gu-Lemberg, Kevin Liu, Kevin Lu, Kevin Stone, Kevin Yu, Lama Ahmad, Lauren Yang, Leo Liu,
  Leon Maksin, Leyton Ho, Liam Fedus, Lilian Weng, Linden Li, Lindsay McCallum, Lindsey Held, Lorenz Kuhn, Lukas Kondraciuk, Lukasz Kaiser, Luke Metz, Madelaine Boyd, Maja Trebacz, Manas Joglekar, Mark Chen, Marko Tintor, Mason Meyer, Matt Jones, Matt Kaufer, Max Schwarzer, Meghan Shah, Mehmet Yatbaz, Melody~Y. Guan, Mengyuan Xu, Mengyuan Yan, Mia Glaese, Mianna Chen, Michael Lampe, Michael Malek, Michele Wang, Michelle Fradin, Mike McClay, Mikhail Pavlov, Miles Wang, Mingxuan Wang, Mira Murati, Mo~Bavarian, Mostafa Rohaninejad, Nat McAleese, Neil Chowdhury, Neil Chowdhury, Nick Ryder, Nikolas Tezak, Noam Brown, Ofir Nachum, Oleg Boiko, Oleg Murk, Olivia Watkins, Patrick Chao, Paul Ashbourne, Pavel Izmailov, Peter Zhokhov, Rachel Dias, Rahul Arora, Randall Lin, Rapha~Gontijo Lopes, Raz Gaon, Reah Miyara, Reimar Leike, Renny Hwang, Rhythm Garg, Robin Brown, Roshan James, Rui Shu, Ryan Cheu, Ryan Greene, Saachi Jain, Sam Altman, Sam Toizer, Sam Toyer, Samuel Miserendino, Sandhini Agarwal, Santiago Hernandez,
  Sasha Baker, Scott McKinney, Scottie Yan, Shengjia Zhao, Shengli Hu, Shibani Santurkar, Shraman~Ray Chaudhuri, Shuyuan Zhang, Siyuan Fu, Spencer Papay, Steph Lin, Suchir Balaji, Suvansh Sanjeev, Szymon Sidor, Tal Broda, Aidan Clark, Tao Wang, Taylor Gordon, Ted Sanders, Tejal Patwardhan, Thibault Sottiaux, Thomas Degry, Thomas Dimson, Tianhao Zheng, Timur Garipov, Tom Stasi, Trapit Bansal, Trevor Creech, Troy Peterson, Tyna Eloundou, Valerie Qi, Vineet Kosaraju, Vinnie Monaco, Vitchyr Pong, Vlad Fomenko, Weiyi Zheng, Wenda Zhou, Wes McCabe, Wojciech Zaremba, Yann Dubois, Yinghai Lu, Yining Chen, Young Cha, Yu~Bai, Yuchen He, Yuchen Zhang, Yunyun Wang, Zheng Shao, and Zhuohan Li.
\newblock Openai o1 system card, 2024.
\newblock URL \url{https://arxiv.org/abs/2412.16720}.

\bibitem[Ouyang et~al.(2022)Ouyang, Wu, Jiang, Almeida, Wainwright, Mishkin, Zhang, Agarwal, Slama, Ray, et~al.]{ouyang2022training}
Long Ouyang, Jeffrey Wu, Xu~Jiang, Diogo Almeida, Carroll Wainwright, Pamela Mishkin, Chong Zhang, Sandhini Agarwal, Katarina Slama, Alex Ray, et~al.
\newblock Training language models to follow instructions with human feedback.
\newblock \emph{Advances in Neural Information Processing Systems}, 35:\penalty0 27730--27744, 2022.

\bibitem[Rein et~al.(2024)Rein, Hou, Stickland, Petty, Pang, Dirani, Michael, and Bowman]{rein2024gpqa}
David Rein, Betty~Li Hou, Asa~Cooper Stickland, Jackson Petty, Richard~Yuanzhe Pang, Julien Dirani, Julian Michael, and Samuel~R Bowman.
\newblock Gpqa: A graduate-level google-proof q\&a benchmark.
\newblock In \emph{First Conference on Language Modeling}, 2024.

\bibitem[Robertson et~al.(2009)Robertson, Zaragoza, et~al.]{robertson2009probabilistic}
Stephen Robertson, Hugo Zaragoza, et~al.
\newblock The probabilistic relevance framework: Bm25 and beyond.
\newblock \emph{Foundations and Trends{\textregistered} in Information Retrieval}, 3\penalty0 (4):\penalty0 333--389, 2009.

\bibitem[Sarangi et~al.(2025)Sarangi, Elgarf, and Salam]{sarangi2025decompose}
Sneheel Sarangi, Maha Elgarf, and Hanan Salam.
\newblock Decompose-tom: Enhancing theory of mind reasoning in large language models through simulation and task decomposition.
\newblock \emph{arXiv preprint arXiv:2501.09056}, 2025.

\bibitem[Schulman et~al.(2017)Schulman, Wolski, Dhariwal, Radford, and Klimov]{schulman2017proximal}
John Schulman, Filip Wolski, Prafulla Dhariwal, Alec Radford, and Oleg Klimov.
\newblock Proximal policy optimization algorithms.
\newblock \emph{arXiv preprint arXiv:1707.06347}, 2017.

\bibitem[Setlur et~al.(2024)Setlur, Nagpal, Fisch, Geng, Eisenstein, Agarwal, Agarwal, Berant, and Kumar]{setlur2024rewarding}
Amrith Setlur, Chirag Nagpal, Adam Fisch, Xinyang Geng, Jacob Eisenstein, Rishabh Agarwal, Alekh Agarwal, Jonathan Berant, and Aviral Kumar.
\newblock Rewarding progress: Scaling automated process verifiers for llm reasoning.
\newblock \emph{arXiv preprint arXiv:2410.08146}, 2024.

\bibitem[Shao et~al.(2024)Shao, Wang, Zhu, Xu, Song, Bi, Zhang, Zhang, Li, Wu, et~al.]{shao2024deepseekmath}
Zhihong Shao, Peiyi Wang, Qihao Zhu, Runxin Xu, Junxiao Song, Xiao Bi, Haowei Zhang, Mingchuan Zhang, YK~Li, Y~Wu, et~al.
\newblock Deepseekmath: Pushing the limits of mathematical reasoning in open language models.
\newblock \emph{arXiv preprint arXiv:2402.03300}, 2024.

\bibitem[Sheng et~al.(2024)Sheng, Zhang, Ye, Wu, Zhang, Zhang, Peng, Lin, and Wu]{sheng2024hybridflow}
Guangming Sheng, Chi Zhang, Zilingfeng Ye, Xibin Wu, Wang Zhang, Ru~Zhang, Yanghua Peng, Haibin Lin, and Chuan Wu.
\newblock Hybridflow: A flexible and efficient rlhf framework.
\newblock \emph{arXiv preprint arXiv:2409.19256}, 2024.

\bibitem[Snell et~al.(2024)Snell, Lee, Xu, and Kumar]{snell2024scaling}
Charlie Snell, Jaehoon Lee, Kelvin Xu, and Aviral Kumar.
\newblock Scaling llm test-time compute optimally can be more effective than scaling model parameters.
\newblock \emph{arXiv preprint arXiv:2408.03314}, 2024.

\bibitem[Song et~al.(2023)Song, Yu, Li, Yu, Huang, Li, and Wang]{song2023preference}
Feifan Song, Bowen Yu, Minghao Li, Haiyang Yu, Fei Huang, Yongbin Li, and Houfeng Wang.
\newblock Preference ranking optimization for human alignment.
\newblock \emph{arXiv preprint arXiv:2306.17492}, 2023.

\bibitem[Sp{\"a}rck~Jones et~al.(1998)Sp{\"a}rck~Jones, Walker, and Robertson]{sparck1998probabilistic}
K~Sp{\"a}rck~Jones, S~Walker, and SE~Robertson.
\newblock A probabilistic model of information and retrieval: development and status.
\newblock Technical report, University of Cambridge, Computer Laboratory, 1998.

\bibitem[Sutton et~al.(1999)Sutton, McAllester, Singh, and Mansour]{sutton1999policy}
Richard~S Sutton, David McAllester, Satinder Singh, and Yishay Mansour.
\newblock Policy gradient methods for reinforcement learning with function approximation.
\newblock \emph{Advances in neural information processing systems}, 12, 1999.

\bibitem[Tang et~al.(2024)Tang, Hu, Zhou, Zhong, Zheng, Si, and Ellis]{tang2024code}
Hao Tang, Keya Hu, Jin~Peng Zhou, Sicheng Zhong, Wei-Long Zheng, Xujie Si, and Kevin Ellis.
\newblock Code repair with llms gives an exploration-exploitation tradeoff.
\newblock \emph{arXiv preprint arXiv:2405.17503}, 2024.

\bibitem[Team et~al.(2025)Team, Du, Gao, Xing, Jiang, Chen, Li, Xiao, Du, Liao, et~al.]{team2025kimi}
Kimi Team, Angang Du, Bofei Gao, Bowei Xing, Changjiu Jiang, Cheng Chen, Cheng Li, Chenjun Xiao, Chenzhuang Du, Chonghua Liao, et~al.
\newblock Kimi k1. 5: Scaling reinforcement learning with llms.
\newblock \emph{arXiv preprint arXiv:2501.12599}, 2025.

\bibitem[Team(2024)]{qwq-32b-preview}
Qwen Team.
\newblock Qwq: Reflect deeply on the boundaries of the unknown, November 2024.
\newblock URL \url{https://qwenlm.github.io/blog/qwq-32b-preview/}.

\bibitem[Thagard(2013)]{thagard2013cognitive}
Paul Thagard.
\newblock Cognitive science.
\newblock In \emph{The Routledge companion to philosophy of science}, pp.\  597--608. Routledge, 2013.

\bibitem[Wang et~al.(2024)Wang, Li, Shao, Xu, Dai, Li, Chen, Wu, and Sui]{wang2024math}
Peiyi Wang, Lei Li, Zhihong Shao, Runxin Xu, Damai Dai, Yifei Li, Deli Chen, Yu~Wu, and Zhifang Sui.
\newblock Math-shepherd: Verify and reinforce llms step-by-step without human annotations.
\newblock In \emph{Proceedings of the 62nd Annual Meeting of the Association for Computational Linguistics (Volume 1: Long Papers)}, pp.\  9426--9439, 2024.

\bibitem[Wang et~al.(2022)Wang, Kordi, Mishra, Liu, Smith, Khashabi, and Hajishirzi]{wang2022self}
Yizhong Wang, Yeganeh Kordi, Swaroop Mishra, Alisa Liu, Noah~A Smith, Daniel Khashabi, and Hannaneh Hajishirzi.
\newblock Self-instruct: Aligning language models with self-generated instructions.
\newblock \emph{arXiv preprint arXiv:2212.10560}, 2022.

\bibitem[Wei et~al.(2022)Wei, Wang, Schuurmans, Bosma, Xia, Chi, Le, Zhou, et~al.]{CoT}
Jason Wei, Xuezhi Wang, Dale Schuurmans, Maarten Bosma, Fei Xia, Ed~Chi, Quoc~V Le, Denny Zhou, et~al.
\newblock Chain-of-thought prompting elicits reasoning in large language models.
\newblock \emph{Advances in neural information processing systems}, 35:\penalty0 24824--24837, 2022.

\bibitem[Wu et~al.(2024)Wu, Sun, Li, Welleck, and Yang]{inferenceScaling}
Yangzhen Wu, Zhiqing Sun, Shanda Li, Sean Welleck, and Yiming Yang.
\newblock Inference scaling laws: An empirical analysis of compute-optimal inference for problem-solving with language models.
\newblock \emph{arXiv preprint arXiv:2408.00724}, 2024.

\bibitem[Xia et~al.(2025)Xia, Qin, Li, Ma, Fan, Chern, Zou, Zhou, Hu, Jin, et~al.]{xia2025generative}
Shijie Xia, Yiwei Qin, Xuefeng Li, Yan Ma, Run-Ze Fan, Steffi Chern, Haoyang Zou, Fan Zhou, Xiangkun Hu, Jiahe Jin, et~al.
\newblock Generative ai act ii: Test time scaling drives cognition engineering.
\newblock \emph{arXiv preprint arXiv:2504.13828}, 2025.

\bibitem[Xie et~al.(2025)Xie, Gao, Ren, Luo, Hong, Dai, Zhou, Qiu, Wu, and Luo]{xie2025logic}
Tian Xie, Zitian Gao, Qingnan Ren, Haoming Luo, Yuqian Hong, Bryan Dai, Joey Zhou, Kai Qiu, Zhirong Wu, and Chong Luo.
\newblock Logic-rl: Unleashing llm reasoning with rule-based reinforcement learning.
\newblock \emph{arXiv preprint arXiv:2502.14768}, 2025.

\bibitem[Xiong et~al.(2025)Xiong, Yao, Xu, Pang, Wang, Sahoo, Li, Jiang, Zhang, Xiong, et~al.]{xiong2025minimalist}
Wei Xiong, Jiarui Yao, Yuhui Xu, Bo~Pang, Lei Wang, Doyen Sahoo, Junnan Li, Nan Jiang, Tong Zhang, Caiming Xiong, et~al.
\newblock A minimalist approach to llm reasoning: from rejection sampling to reinforce.
\newblock \emph{arXiv preprint arXiv:2504.11343}, 2025.

\bibitem[Xue et~al.(2024)Xue, Huang, Liu, Lin, Ning, Jin, Li, and Liu]{xue2024decompose}
Shangzi Xue, Zhenya Huang, Jiayu Liu, Xin Lin, Yuting Ning, Binbin Jin, Xin Li, and Qi~Liu.
\newblock Decompose, analyze and rethink: Solving intricate problems with human-like reasoning cycle.
\newblock \emph{Advances in Neural Information Processing Systems}, 37:\penalty0 357--385, 2024.

\bibitem[Yang et~al.(2024{\natexlab{a}})Yang, Yang, Zhang, Hui, Zheng, Yu, Li, Liu, Huang, Wei, et~al.]{yang2024qwen2}
An~Yang, Baosong Yang, Beichen Zhang, Binyuan Hui, Bo~Zheng, Bowen Yu, Chengyuan Li, Dayiheng Liu, Fei Huang, Haoran Wei, et~al.
\newblock Qwen2. 5 technical report.
\newblock \emph{arXiv preprint arXiv:2412.15115}, 2024{\natexlab{a}}.

\bibitem[Yang et~al.(2024{\natexlab{b}})Yang, Yu, Zhang, Cao, Xu, Zhang, Gonzalez, and Cui]{BoT}
Ling Yang, Zhaochen Yu, Tianjun Zhang, Shiyi Cao, Minkai Xu, Wentao Zhang, Joseph~E Gonzalez, and Bin Cui.
\newblock Buffer of thoughts: Thought-augmented reasoning with large language models.
\newblock \emph{Advances in Neural Information Processing Systems}, 2024{\natexlab{b}}.

\bibitem[Yao et~al.(2024)Yao, Yu, Zhao, Shafran, Griffiths, Cao, and Narasimhan]{ToT}
Shunyu Yao, Dian Yu, Jeffrey Zhao, Izhak Shafran, Tom Griffiths, Yuan Cao, and Karthik Narasimhan.
\newblock Tree of thoughts: Deliberate problem solving with large language models.
\newblock \emph{Advances in Neural Information Processing Systems}, 36, 2024.

\bibitem[Yu et~al.(2025)Yu, Zhang, Zhu, Yuan, Zuo, Yue, Fan, Liu, Liu, Liu, et~al.]{yu2025dapo}
Qiying Yu, Zheng Zhang, Ruofei Zhu, Yufeng Yuan, Xiaochen Zuo, Yu~Yue, Tiantian Fan, Gaohong Liu, Lingjun Liu, Xin Liu, et~al.
\newblock Dapo: An open-source llm reinforcement learning system at scale.
\newblock \emph{arXiv preprint arXiv:2503.14476}, 2025.

\bibitem[Yuan et~al.(2025)Yuan, Yue, Zhu, Fan, and Yan]{yuan2025s}
Yufeng Yuan, Yu~Yue, Ruofei Zhu, Tiantian Fan, and Lin Yan.
\newblock What's behind ppo's collapse in long-cot? value optimization holds the secret.
\newblock \emph{arXiv preprint arXiv:2503.01491}, 2025.

\bibitem[Zeleny(1987)]{zeleny1987management}
Milan Zeleny.
\newblock Management support systems: Towards integrated knowledge management.
\newblock \emph{Human systems management}, 7\penalty0 (1):\penalty0 59--70, 1987.

\bibitem[Zeng et~al.(2025)Zeng, Huang, Liu, Liu, He, Ma, and He]{zeng2025simplerl}
Weihao Zeng, Yuzhen Huang, Qian Liu, Wei Liu, Keqing He, Zejun Ma, and Junxian He.
\newblock Simplerl-zoo: Investigating and taming zero reinforcement learning for open base models in the wild.
\newblock \emph{arXiv preprint arXiv:2503.18892}, 2025.

\bibitem[Zhang et~al.(2025)Zhang, Lyu, Sun, Wang, Zhang, Guo, Wang, King, Liu, and Ma]{zhang2025and}
Qiyuan Zhang, Fuyuan Lyu, Zexu Sun, Lei Wang, Weixu Zhang, Zhihan Guo, Yufei Wang, Irwin King, Xue Liu, and Chen Ma.
\newblock What, how, where, and how well? a survey on test-time scaling in large language models.
\newblock \emph{arXiv preprint arXiv:2503.24235}, 2025.

\bibitem[Zhang et~al.(2024)Zhang, Chen, Shen, Ding, Tenenbaum, and Gan]{zhang2023planning}
Shun Zhang, Zhenfang Chen, Yikang Shen, Mingyu Ding, Joshua~B Tenenbaum, and Chuang Gan.
\newblock Planning with large language models for code generation.
\newblock \emph{International Conference on Machine Learning}, 2024.

\bibitem[Zhao et~al.(2025)Zhao, Meterez, Kakade, Pehlevan, Jelassi, and Malach]{zhao2025echo}
Rosie Zhao, Alexandru Meterez, Sham Kakade, Cengiz Pehlevan, Samy Jelassi, and Eran Malach.
\newblock Echo chamber: Rl post-training amplifies behaviors learned in pretraining.
\newblock \emph{arXiv preprint arXiv:2504.07912}, 2025.

\bibitem[Zhao et~al.(2023)Zhao, Li, and Kong]{zhao2023decomposing}
Xueliang Zhao, Wenda Li, and Lingpeng Kong.
\newblock Decomposing the enigma: Subgoal-based demonstration learning for formal theorem proving.
\newblock \emph{arXiv preprint arXiv:2305.16366}, 2023.

\bibitem[Zhou et~al.(2023)Zhou, Yan, Shlapentokh-Rothman, Wang, and Wang]{zhou2023language}
Andy Zhou, Kai Yan, Michal Shlapentokh-Rothman, Haohan Wang, and Yu-Xiong Wang.
\newblock Language agent tree search unifies reasoning acting and planning in language models.
\newblock \emph{arXiv preprint arXiv:2310.04406}, 2023.

\bibitem[Zhou et~al.(2025)Zhou, Zhao, Zhang, Wang, Wang, Chen, Wang, Chen, Jie, Zhang, et~al.]{zhou2025solving}
Yichi Zhou, Jianqiu Zhao, Yongxin Zhang, Bohan Wang, Siran Wang, Luoxin Chen, Jiahui Wang, Haowei Chen, Allan Jie, Xinbo Zhang, et~al.
\newblock Solving formal math problems by decomposition and iterative reflection.
\newblock \emph{arXiv preprint arXiv:2507.15225}, 2025.

\end{thebibliography}
\bibliographystyle{iclr2026_conference}

\clearpage
\appendix

\section{Mathematical Derivations}\label{app:proof}

In this section, we present the proof of Theorem~\ref{thm:var_order}. To facilitate this, we first introduce Lemma~\ref{lemma:iteration_complexity}.
\begin{lemma}[Stage-wise Iteration Complexity]
\label{lemma:iteration_complexity}
Under the same assumptions as Theorem~\ref{thm:var_order}, for any target accuracy $\epsilon>0$ and any $\delta\in(0,1)$, with probability at least $1-\delta$ the iteration complexity of stage $m\in\{\emph{DR},\emph{DecR},\emph{RefR}\}$ satisfies
\[
T_m(\epsilon)\;\le\;\frac{C\sigma_m^2}{\rho_m^2\epsilon}\log\!\Bigl(\frac{1}{\delta}\Bigr),
\]
where $C$ is a universal constant depending only on the smoothness $L$ and initial step-size $\alpha_0$.  
Consequently,
\[
T_{\emph{RefR}}(\epsilon)\;<\;T_{\emph{DecR}}(\epsilon)\;<\;T_{\emph{DR}}(\epsilon).
\]
\end{lemma}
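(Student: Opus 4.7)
The plan is to combine the standard non-convex SGD complexity analysis with a sub-Gaussian concentration argument, and then propagate the per-stage variance bound from Theorem~\ref{thm:var_order}. First, I would set up the stochastic optimization problem: let $J_m(\theta) := \E[G_{0,m}]$ denote the stage-$m$ value function, and note that assumptions (i)--(iii) together with the boundedness of the score function $\nabla_\theta \log \pi_\theta$ imply that $J_m$ is $L$-smooth with a stage-independent constant. The estimator $g_m(\theta)$ from Theorem~\ref{thm:var_order} is unbiased for $\nabla J_m(\theta)$ with variance $\sigma_m^2 := \Var(g_m(\theta))$, and because rewards $r_m$ lie in $[0,1]$ and $G_{t,m}$ is a bounded sum of at most $H_m$ such rewards, its centered version is sub-Gaussian with proxy of order $\sigma_m$.

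Second, I would instantiate the descent lemma for $L$-smooth objectives under the stochastic ascent $\theta_{t+1} = \theta_t + \alpha_t g_m(\theta_t)$, namely
\begin{equation*}
J_m(\theta_{t+1}) \;\geq\; J_m(\theta_t) + \alpha_t\langle \nabla J_m(\theta_t), g_m(\theta_t)\rangle - \tfrac{L\alpha_t^2}{2}\|g_m(\theta_t)\|^2.
\end{equation*}
Taking conditional expectations, telescoping over $t=0,\dots,T-1$, and choosing the constant step $\alpha_t = \min\bigl(\alpha_0, \tfrac{\rho_m}{L\sigma_m\sqrt{T}}\bigr)$ yields the classical expected bound $\min_{t<T}\E\|\nabla J_m(\theta_t)\|^2 \leq \mathcal{O}(\sigma_m^2/(\rho_m^2 T))$, where the signal factor $\rho_m$ captures the stage-dependent gradient-to-variance ratio absorbed into the statement's definition of iteration complexity.

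Third, to lift this expected-value bound to the high-probability form, I would apply Freedman's (or Azuma--Hoeffding's) inequality to the martingale difference sequence $\xi_t := g_m(\theta_t) - \nabla J_m(\theta_t)$: since $\xi_t$ is conditionally mean-zero and sub-Gaussian with proxy $\sigma_m$, the aggregate noise $\sum_{t<T}\alpha_t\langle \nabla J_m(\theta_t),\xi_t\rangle$ concentrates at scale $\sigma_m\sqrt{T\log(1/\delta)}$ with probability at least $1-\delta$. Substituting into the telescoped descent inequality and solving the resulting inequality for the smallest $T$ that drives $\min_{t<T}\|\nabla J_m(\theta_t)\|^2 \leq \epsilon$ yields $T_m(\epsilon) \leq C\sigma_m^2\log(1/\delta)/(\rho_m^2 \epsilon)$, where the universal constant $C$ absorbs $L$, $\alpha_0$, and the initial optimality gap $J_m^\star - J_m(\theta_0)$ (bounded uniformly by $H/(1-\text{discount})$ thanks to bounded rewards, independently of the stage).

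Finally, the stage ordering $T_{\text{RefR}}(\epsilon) < T_{\text{DecR}}(\epsilon) < T_{\text{DR}}(\epsilon)$ is immediate by strict monotonicity of the bound in $\sigma_m^2$ together with the chain $\sigma_{\text{RefR}}^2 \leq \gamma(1-\eta)\sigma_{\text{DecR}}^2 < \sigma_{\text{DecR}}^2 < \sigma_{\text{DR}}^2$ supplied by Theorem~\ref{thm:var_order}; the factor $\rho_m$ is comparable across stages because DAPO's group-wise advantage normalization is stage-agnostic, so it contributes only a universal multiplicative constant absorbed into $C$. The main obstacle will be the high-probability step: without explicit gradient clipping or projection, controlling the iterate trajectory so that the sub-Gaussian proxy $\sigma_m$ remains valid throughout requires either a stopping-time argument (restarting on escape from a sublevel set of $J_m$) or a careful union bound over iterates, and ensuring the resulting $C$ remains uniform in the stage index $m$ is the technically delicate part of the derivation.
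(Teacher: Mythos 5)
Your overall architecture (smoothness, a step-size schedule, telescoping, a martingale concentration step for the high-probability claim, then monotonicity in $\sigma_m^2$ for the stage ordering) matches the paper's at a high level, but there is a genuine gap in the middle step that breaks the claimed rate. You analyze the iteration as plain non-convex stochastic ascent and assert that the telescoped descent lemma with the constant step $\alpha_t=\min(\alpha_0,\rho_m/(L\sigma_m\sqrt{T}))$ yields $\min_{t<T}\E\|\nabla J_m(\theta_t)\|^2\le \mathcal{O}(\sigma_m^2/(\rho_m^2 T))$. That is not what the standard analysis gives: telescoping the descent lemma under that step size produces a bound of order $L\Delta_0/T+\sigma_m\sqrt{L\Delta_0/T}$ on the averaged squared gradient norm, so driving it below $\epsilon$ costs $T=\Theta(\sigma_m^2/\epsilon^2)$ iterations, not the $\sigma_m^2/\epsilon$ dependence stated in the lemma. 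The $1/\epsilon$ rate requires an additional structural assumption linking gradient norm to optimality gap. The paper's proof supplies exactly this: it invokes a Polyak--\L{}ojasiewicz condition $\|\nabla J_m(\theta)\|^2\ge 2\mu_m(J_m^*-J_m(\theta))$ with $\mu_m=\rho_m^2\mu_{\min}^2/2$, tracks $\|\theta_t-\theta^*\|^2$ rather than stationarity, and uses the diminishing schedule $\alpha_t=1/(\mu_m(t+1))$. This is also where $\rho_m$ enters the bound --- as the stage-dependent PL constant --- whereas you declare $\rho_m$ ``comparable across stages'' and absorb it into $C$, which contradicts the lemma's explicit $\rho_m^{-2}$ factor and the paper's use of the ordering $\rho_{\text{RefR}}>\rho_{\text{DecR}}>\rho_{\text{DR}}$ alongside the variance ordering to conclude $T_{\text{RefR}}<T_{\text{DecR}}<T_{\text{DR}}$.

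Two further remarks. First, your target quantity ($\epsilon$-stationarity of the gradient) differs from the paper's ($\E\|\theta_T-\theta^*\|^2\le\epsilon$); these are not interchangeable without the PL condition, so you should state explicitly which notion of ``target accuracy'' $T_m(\epsilon)$ refers to. Second, your concern about the high-probability step is well placed --- the paper handles it with a one-line appeal to Azuma--Hoeffding on $M_t=\|\theta_t-\theta^*\|^2-\E\|\theta_t-\theta^*\|^2$ and is no more rigorous than your sketch --- but this is not where your argument fails; the missing PL-type assumption is. To repair your proof, add the PL condition (or an equivalent gradient-domination hypothesis) before telescoping, switch to the paper's Lyapunov function $\|\theta_t-\theta^*\|^2$ or to the optimality gap $J_m^*-J_m(\theta_t)$, and keep $\rho_m$ as a stage-dependent quantity rather than folding it into the universal constant.
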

\begin{proof}
Fix a stage $m$ and denote $J_m(\theta)=\mathbb{E}_{\tau\sim\pi_\theta}[R_m(\tau)]$.  
We first establish a generic bound on $\|\theta_T-\theta^*\|^2$ and then translate it into an iteration-complexity statement.

By $L$-smoothness of $J_m(\theta)$ and the update rule
$\theta_{t+1}=\theta_t+\alpha_t g_m(\theta_t)$,
\[
\|\theta_{t+1}-\theta^*\|^2
\le\|\theta_t-\theta^*\|^2
+2\alpha_t\langle g_m(\theta_t),\theta_t-\theta^*\rangle
+\alpha_t^2\|g_m(\theta_t)\|^2.
\]
Taming expectation w.r.t.\ the randomness of $g_m$,
\begin{align*}
\mathbb{E}[\|\theta_{t+1}-\theta^*\|^2]
&\le\mathbb{E}[\|\theta_t-\theta^*\|^2]
+2\alpha_t\langle\nabla J_m(\theta_t),\theta_t-\theta^*\rangle
+\alpha_t^2\bigl(\|\nabla J_m(\theta_t)\|^2+\tfrac{\sigma_m^2}{N_m}\bigr)\\
&\le\mathbb{E}[\|\theta_t-\theta^*\|^2]
-2\alpha_t\mu_m\mathbb{E}[J_m^*-J_m(\theta_t)]
+\alpha_t^2\bigl(2L(J_m^*-J_m(\theta_t))+\tfrac{\sigma_m^2}{N_m}\bigr),
\end{align*}
where the last inequality uses the Polyak-Łojasiewicz (PL) Condition
$\|\nabla J_m(\theta)\|^2\ge 2\mu_m(J_m^*-J_m(\theta))$ and smoothness
$J_m^*-J_m(\theta)\ge\frac{1}{2L}\|\nabla J_m(\theta)\|^2$.

Let $\Delta_t=\mathbb{E}[J_m^*-J_m(\theta_t)]$.  Then
\[
\mathbb{E}[\|\theta_{t+1}-\theta^*\|^2]
\le(1-2\alpha_t\mu_m+2L\alpha_t^2)\mathbb{E}[\|\theta_t-\theta^*\|^2]
+\alpha_t^2\tfrac{\sigma_m^2}{N_m}.
\]
Choosing $\alpha_t=\frac{1}{\mu_m(t+1)}$ yields
\[
\mathbb{E}[\|\theta_T-\theta^*\|^2]
\le\frac{C\sigma_m^2}{\mu_m^2 N_m T}
\le\frac{C'\sigma_m^2}{\rho_m^4\mu_{\min}^4 N_m T},
\]
where we used $\mu_m=\rho_m^2\mu_{\min}^2/2$.

To achieve $\mathbb{E}[\|\theta_T-\theta^*\|^2]\le\epsilon$, it suffices to tame
\[
T\ge\frac{C\sigma_m^2}{\rho_m^2\epsilon}\log\!\Bigl(\frac{1}{\delta}\Bigr).
\]
High-probability extension follows from Azuma-Hoeffding applied to the martingale sequence
$M_t=\|\theta_t-\theta^*\|^2-\mathbb{E}[\|\theta_t-\theta^*\|^2]$.

Inserting the empirical inequalities
\[
\rho_{\text{RefR}}>\rho_{\text{DecR}}>\rho_{\text{DR}},\qquad
\sigma_{\text{RefR}}^2<\sigma_{\text{DecR}}^2<\sigma_{\text{DR}}^2
\]
into the bound gives
\[
T_{\text{RefR}}(\epsilon)\;<\;T_{\text{DecR}}(\epsilon)\;<\;T_{\text{DR}}(\epsilon),
\]
which completes the proof.
\end{proof}
Then, we can conduct the proof of Theorem~\ref{thm:var_order}.
\setcounter{theorem}{0}
\begin{theorem}[Convergence Rate across Stages (Restatement)]
\label{thm:var_order}
Let $m\in\{\emph{DR},\emph{DecR},\emph{RefR}\}$ index the rollout stages of Cog-Rethinker.
Assume (i) horizon $H$ is finite and rewards $r_m(\cdot,\cdot)\in [0,1]$; (ii) for DecR, the problem is decomposed into sub-problems of horizon $H'\!<\!H$;
(iii) for RefR, reflection is performed on a sub-tree of horizon $H''\!\le \!\gamma H'$ with $\gamma\!\in\!(0,1)$.
Then the policy-gradient estimator,
\[
g_m(\theta)=\sum_{t=0}^{H_m-1}\nabla_\theta\log\pi_\theta(a_t|q_t)\,G_{t,k},\quad G_{t,m}=\sum_{u=t}^{H_m-1}r_m(q_u,a_u)
\]
satisfies
\[
\emph{Var}(g_{\emph{RefR}})\;\le\;\gamma(1-\eta)\,\emph{Var}(g_{\emph{DecR}})\;<\;\emph{Var}(g_{\emph{DecR}})\;<\;\emph{Var}(g_{\emph{DR}}),
\]
where $\eta\!\in\!(0,1)$ is the variance-reduction factor induced by importance-sampling the error sub-tree, $\emph{Var}(\cdot)$ represents the variance
Consequently, for target accuracy $\epsilon>0$,
\[
T_{\emph{RefR}}(\epsilon)\;<\;T_{\emph{DecR}}(\epsilon)\;<\;T_{\emph{DR}}(\epsilon).
\]
where $T(\cdot)$ represents the iteration complexity. 
\end{theorem}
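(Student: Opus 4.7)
The plan is to reduce the theorem to two sub-claims: a variance ordering on the gradient estimators, and a translation of that ordering into iteration complexity via Lemma~\ref{lemma:iteration_complexity}. The second step is essentially immediate, since Lemma~\ref{lemma:iteration_complexity} already gives $T_m(\epsilon)=O(\sigma_m^2/(\rho_m^2\epsilon)\log(1/\delta))$, so any strict ordering $\sigma_{\text{RefR}}^2<\sigma_{\text{DecR}}^2<\sigma_{\text{DR}}^2$ (combined with the empirically monotone signal strengths $\rho_m$ used in the lemma's proof) yields $T_{\text{RefR}}(\epsilon)<T_{\text{DecR}}(\epsilon)<T_{\text{DR}}(\epsilon)$. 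Consequently, essentially all of the work lies in establishing the chain of variance inequalities stated in the theorem.

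For the DR-versus-DecR step, I would start from the standard REINFORCE bound
\[
\text{Var}(g_m)\;\le\;\sum_{t=0}^{H_m-1}\mathbb{E}\bigl[\|\nabla_\theta\log\pi_\theta(a_t\mid q_t)\|^{2}\,G_{t,m}^{2}\bigr],
\]
and then use assumption (i) that $r_m(\cdot,\cdot)\in[0,1]$ to replace $G_{t,m}^{2}$ by $(H_m-t)^{2}$. A uniform bound on the score-function norm then gives an $O(H_m^{3})$ upper envelope for $\text{Var}(g_m)$. By assumption (ii), $H_{\text{DecR}}=H'<H=H_{\text{DR}}$, so a direct cubic comparison delivers $\text{Var}(g_{\text{DecR}})<\text{Var}(g_{\text{DR}})$. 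The same envelope applied to RefR, together with assumption (iii) $H''\le\gamma H'$, yields an extra $\gamma^{3}\le\gamma$ factor, which already accounts for the horizon-shrinkage part of the claimed $\gamma(1-\eta)$ prefactor.

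The remaining $(1-\eta)$ factor is the importance-sampling variance reduction on the reflective sub-trajectory. My plan is to formalize the reflection rollout as drawing the corrective token block from a proposal $q(\cdot\mid Q,\{(q_i,a_i)\},A)$ concentrated on the erroneous sub-tree, and then evaluating the policy-gradient contribution under the target $\pi_\theta$ via an importance weight. Applying the classical variance identity for importance-weighted estimators, I would quantify the overlap by an effective-sample-size parameter $\eta\in(0,1)$ (equivalently, a bounded $\chi^{2}$-divergence between $q$ and $\pi_\theta$ on the error sub-tree) and show that this contributes the multiplicative factor $(1-\eta)$. Multiplying the horizon factor $\gamma$ and the overlap factor $(1-\eta)$ then gives $\text{Var}(g_{\text{RefR}})\le\gamma(1-\eta)\text{Var}(g_{\text{DecR}})$, and feeding the whole chain into Lemma~\ref{lemma:iteration_complexity} closes the proof.

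The hard part will be pinning down $\eta$ rigorously. The paper describes reflection only operationally (``revise using the previous wrong answer''), which does not by itself single out a proposal distribution or an error sub-tree. Making $(1-\eta)$ honest requires an explicit decomposition of the reflective rollout, a concrete choice of proposal concentrating on the corrective tokens, and a regularity condition (e.g.\ bounded $\chi^{2}$-divergence or a positive effective sample size) ensuring overlap with $\pi_\theta$. I would therefore introduce this as a stated assumption, under which the horizon-counting portion of the argument is routine and the variance-reduction step becomes a one-line application of the importance-sampling variance identity; weakening the overlap assumption would degrade the constant but not the direction of the inequality, so the qualitative ordering in the theorem remains intact.
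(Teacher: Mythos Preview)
Your plan matches the paper's argument: upper-bound each $\mathrm{Var}(g_m)$ by a horizon-cubed envelope via the REINFORCE variance identity with $r_m\in[0,1]$, compare horizons across stages, and then invoke Lemma~\ref{lemma:iteration_complexity} to translate the variance ordering into iteration complexity. The only notable differences are in packaging. For DecR, the paper explicitly splits into $k$ sub-MDPs of horizon $H'=H/k$ and sums, yielding $\mathrm{Var}(g_{\text{DecR}})\le C\,k(H')^3=CH^3/k^2$, whereas you use the bare comparison $H'<H$; both give the needed strict inequality. For RefR, the paper obtains the $(1-\eta)$ factor from a law-of-total-variance split conditioning on the reflective sub-tree (the conditional-mean term shrunk by a ``negative-curriculum'' effect), while you frame it as an importance-sampling effective-sample-size parameter; either way $\eta$ is posited rather than derived, and your candid acknowledgment that this requires an explicit overlap assumption is accurate of the paper's own proof as well.
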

\begin{proof}
We bound $\mathrm{Var}(g_m)$ for each $m$ by analysing the reward-to-go variance.


From \cite{sutton1999policy},
\[
\mathrm{Var}(g_m)=\mathbb{E}\!\left[\sum_{t=0}^{H_m-1}\bigl\|\nabla_\theta\log\pi_\theta(a_t|s_t)\bigr\|^2\mathrm{Var}_t(G_{t,m})\right],
\]
where $H_m=H,H',H''$ for DR, DecR, RefR respectively and
\[
\mathrm{Var}_t(G_{t,m})=\mathbb{E}\!\left[\Bigl(\sum_{u=t}^{H_m-1}r_m(s_u,a_u)-Q_m(s_t,a_t)\Bigr)^2\Bigm| s_t,a_t\right].
\]

Since rewards are in $[0,1]$ and $H_m=H$,
\[
\mathrm{Var}_t(G_{t,\text{DR}})\le (H-t)^2\le H^2.
\]
Hence,
\[
\mathrm{Var}(g_{\text{DR}})\le C\,H^3,
\]
where $C=\max_{t}\mathbb{E}\|\nabla_\theta\log\pi_\theta(a_t|s_t)\|^2$.

Decomposition splits the original MDP into $k$ sub-MDPs each of horizon $H'=H/k$.
For any sub-problem $i$,
\[
\mathrm{Var}_t(G_{t,\text{DecR}}^{(i)})\le (H')^2=H^2/k^2.
\]
Summing over $k$ sub-problems,
\[
\mathrm{Var}(g_{\text{DecR}})\le C\,k\,(H')^3=C\,H^3/k^2\;<\;\mathrm{Var}(g_{\text{DR}}).
\]

Reflection only resamples a \emph{sub-tree} of relative size $\gamma\in(0,1)$ and uses importance weight $w\le 1$ on the erroneous part.
The law of total variance gives
\[
\mathrm{Var}_t(G_{t,\text{RefR}})=\mathbb{E}\!\left[\mathrm{Var}_t(G_{t,\text{RefR}}\mid\text{sub-tree})\right]+\mathrm{Var}_t\!\left(\mathbb{E}[G_{t,\text{RefR}}\mid\text{sub-tree}]\right).
\]
The first term is bounded by $\gamma(H')^2$; the second term is reduced by the \emph{negative-curriculum} effect (mnowing the wrong path) and satisfies
\[
\mathrm{Var}_t\!\left(\mathbb{E}[G_{t,\text{RefR}}\mid\text{sub-tree}]\right)\le (1-\eta)\mathrm{Var}_t(G_{t,\text{DecR}})
\]
with $\eta\in(0,1)$ depending on the overlap between wrong and corrected trajectories.
Thus
\[
\mathrm{Var}(g_{\text{RefR}})\le C\,\gamma(1-\eta)\,H^3/k^2\;<\;\mathrm{Var}(g_{\text{DecR}}).
\]

From smoothness and PL Condition (as in Lemma~\ref{lemma:iteration_complexity}),
\[
\mathbb{E}[\|\theta_T-\theta^*\|^2]\le\frac{C'\mathrm{Var}(g_m)}{T}.
\]
Therefore the iteration complexity
\[
T_m(\epsilon)\le\frac{C'\mathrm{Var}(g_m)}{\epsilon}
\]
satisfies the ordering
\[
T_{\text{RefR}}(\epsilon)\;<\;T_{\text{DecR}}(\epsilon)\;<\;T_{\text{DR}}(\epsilon).
\qedhere
\]
\end{proof}
\section{Training Details}
\label{app:train_detail}
In this section, we present the training details of our \methodabb{}, including the training algorithm, prompt templates and implementation details.

\subsection{Prompt Templates}

As shown in Figure~\ref{fig:method} of our \methodabb{}, there are two new rollout procedures in the rollout stage of RL training. For the decomposition rollout procedure, we present the details of full prompt template in the following.

\begin{tcolorbox}[float, floatplacement=!h,title = {Prompt Template for Decomposition Rollout}] 
Solve the following math problem step by step. The last line of your response should be of the form Answer: \$Answer (without quotes) where \$Answer is the answer to the problem.

Let's attempt a subproblem decomposition approach:

1. Split the original problem into smaller, logically related subproblems that will assist you in solving the original problem-quantity depends on the problem's logic and your expertise.

2. Address each subproblem individually, analyzing the reasoning behind your solutions.

3. Combine the subproblem solutions to tackle the original, more complex problem.

\textbf{Example problem:} Solve the equation $\frac{3(x-2)}{4} - \frac{2x+5}{3} = \frac{1}{6}$.

\textbf{Solution:}

\textbf{Subproblem 1:} Eliminate denominators by multiplying all terms by the least common multiple (LCM) of 4, 3, and 6, which is 12:  
$12 \cdot \frac{3(x-2)}{4} - 12 \cdot \frac{2x+5}{3} = 12 \cdot \frac{1}{6}$. Simplifies to: $9(x-2) - 4(2x+5) = 2$.  

\textbf{Subproblem 2:} Expand and simplify:  
   $9x - 18 - 8x - 20 = 2$. Combine like terms: $x - 38 = 2$  

\textbf{Subproblem 3:} Isolate the variable:  
   $x = 2 + 38, x = 40$.  

\textbf{Final Solution:}  

Substituting $x=40$ back into the original equation confirms both sides equal $\frac{1}{6}$. 

Answer: 40  

Remember to put your answer on its own line after "Answer:"
\end{tcolorbox}

\begin{tcolorbox}[float, floatplacement=!h,title = {Prompt Template for Reflection Rollout}] 
Solve the following math problem step by step. The last line of your response should be of the form Answer: \$Answer (without quotes) where \$Answer is the answer to the problem.

Let's attempt a subproblem decomposition approach:

1. Analyze the problem. Read the problem carefully and clarify the known conditions and final requirements.

2. Identify the error. Locate the error type in the existing solution (concept error, calculation error, logical loophole).

3. Correct step by step. Correct the error step by step, retain the reasonable part of the original solution and correct the error point one by one.

4. Verify the answer. Use multiple methods to verify the correctness of the final answer.

\textbf{Problem:}  
Solve the system of equations:  

1) $2x + 3y = 7$  

2) $4x - y = 3$  

\textbf{The existing wrong solution:}  

\textbf{Subproblem 1:} Solve equation 2 for y. Starting equation: $4x - y = 3$. 

\textbf{Subproblem 2:} Substitute into equation 1. Correct substitution should be: $2x + 3(4x - 3) = 7$. 

\textbf{Subproblem 3:} Solve the simplified equation. Equation being solved: $2x + 12x = 7$. 

\textbf{Subproblem 4:}  Find corresponding y value. Using partial solution: $y = 4(0.5) = 2$.

Answer: $(0.5, 2)$

Remember to put your answer on its own line after "Answer:"
\end{tcolorbox}

\subsection{Implementation Details}
Our \methodabb{} is easy to implement. We present the training algorithm in Algorithm~\ref{alg:alg1}, with all training procedures based on VeRL~\citep{sheng2024hybridflow}. In the practical training procedure, we introduce a hyperparameter $\lambda$ to control the Gaussian regularization strength. Our experiments are conducted on 8 $\times$ NVIDIA A800 GPUs, with $\lambda$ set to 0.04. Additional results on hyperparameter analysis are presented in Section~\ref{sec:hyper}.

\section{Additional Experiments}\label{app:exp}
\subsection{Training Visualization}
In addition to the results in Figure~\ref{fig:training}, we conduct further experiments comparing our \methodabb{} with DAPO, with the results presented in Figure~\ref{fig:training_visual}. Specifically, we use maj@16 as the comparison metric. 
\begin{figure}[!t]
    \centering
   \subfigure[MATH500 maj@16]{\includegraphics[width=0.4\linewidth]{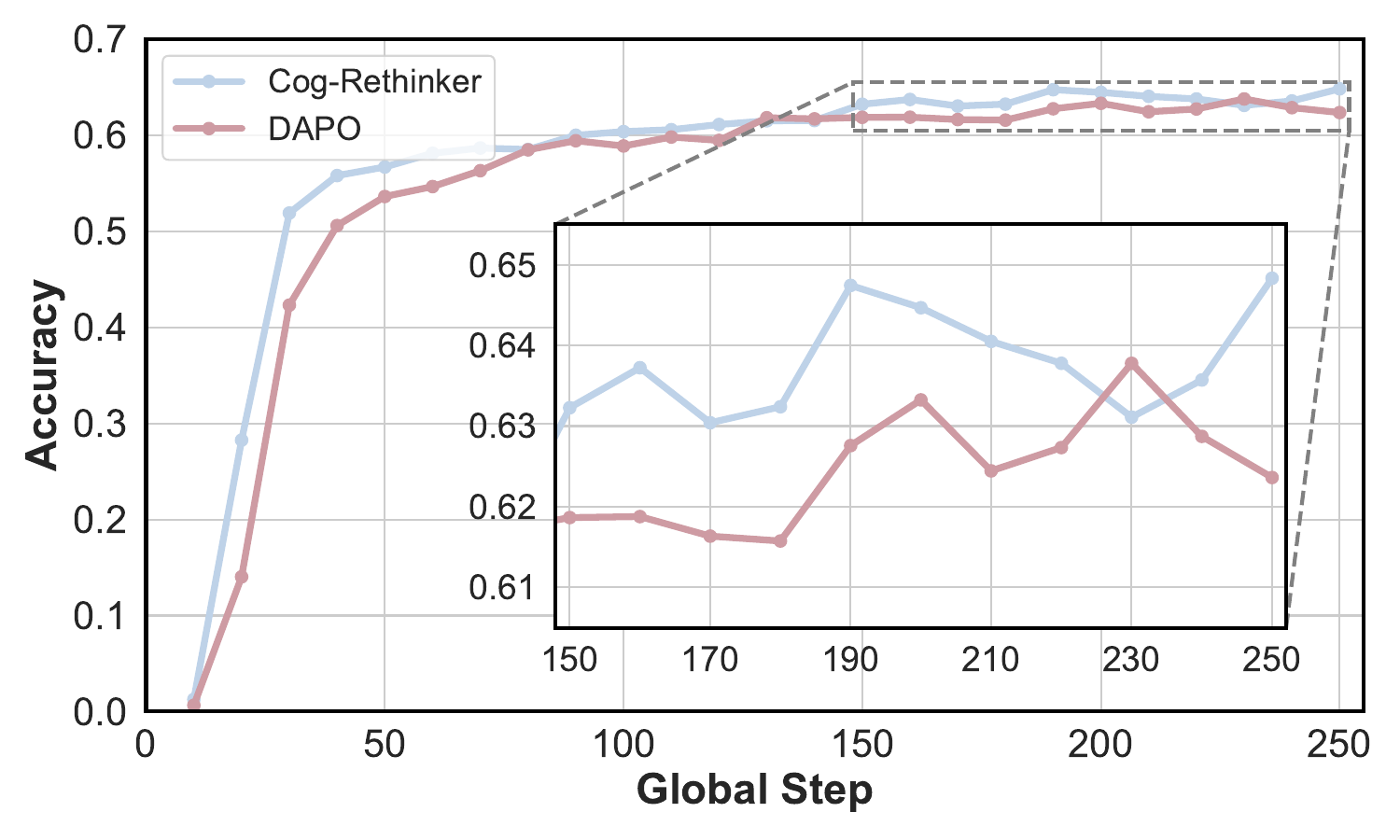}}
   \subfigure[GPQA-Diamond maj@16]{\includegraphics[width=0.4\linewidth]{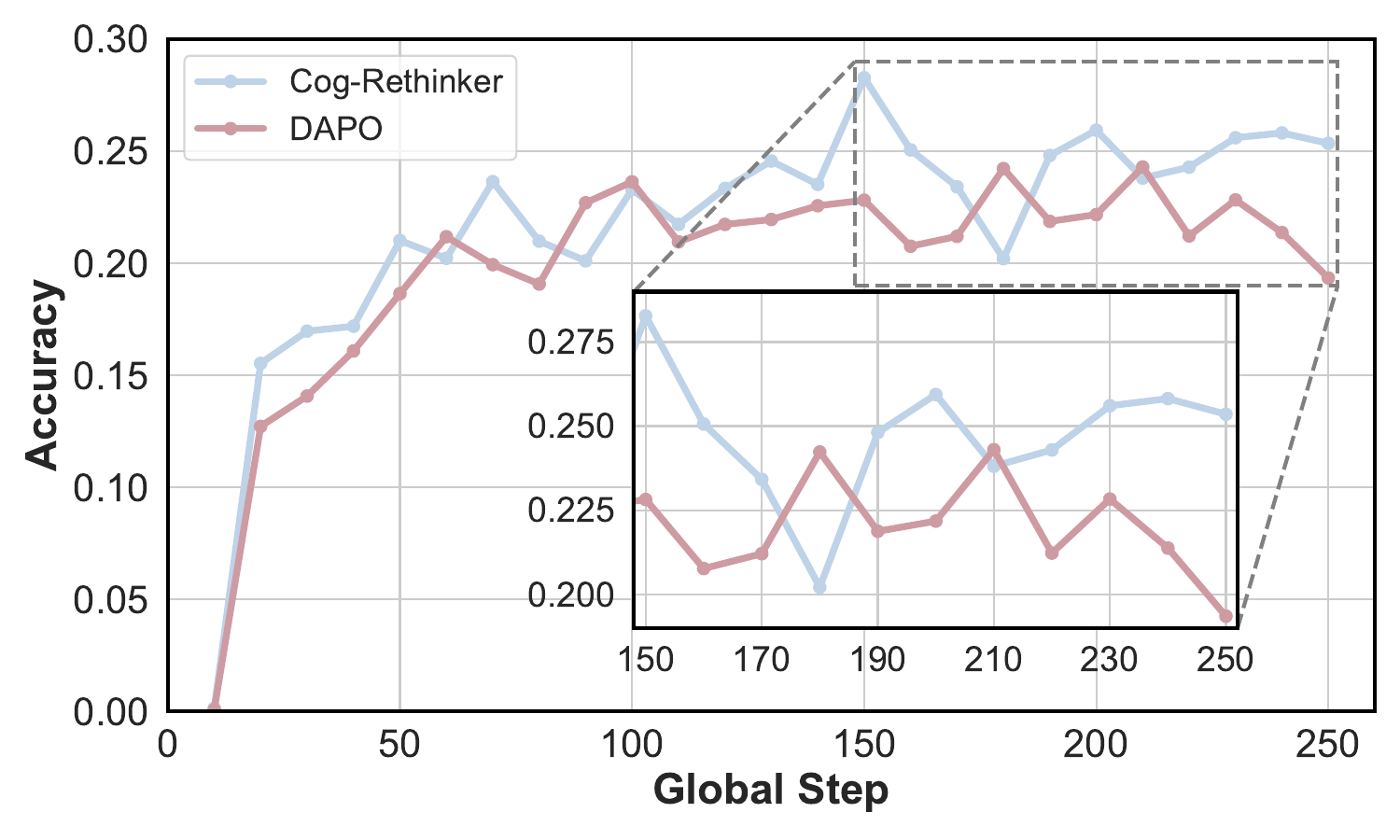}}
   \subfigure[AIME 24 maj@16]{\includegraphics[width=0.4\linewidth]{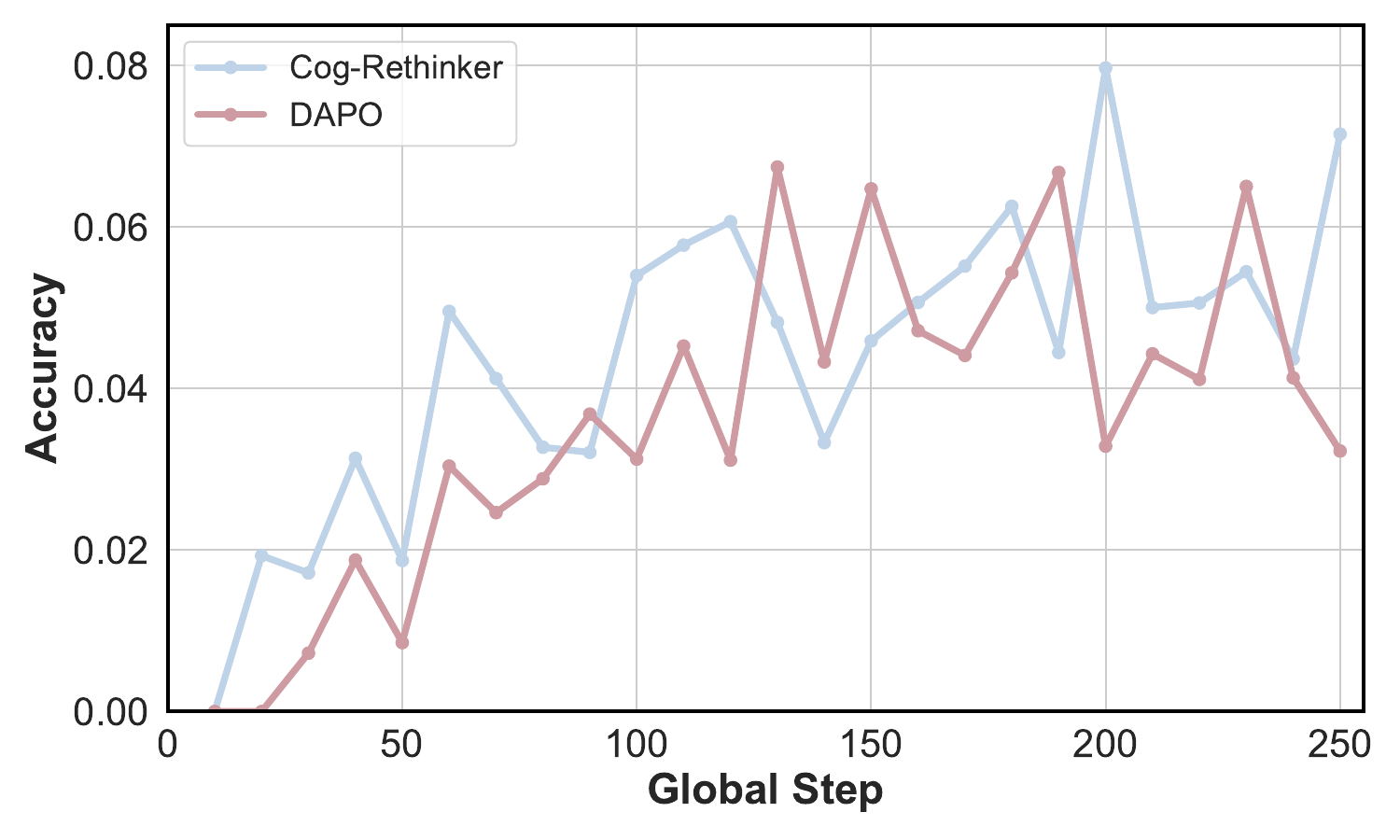}}
   \subfigure[AIME 25 maj@16]{\includegraphics[width=0.4\linewidth]{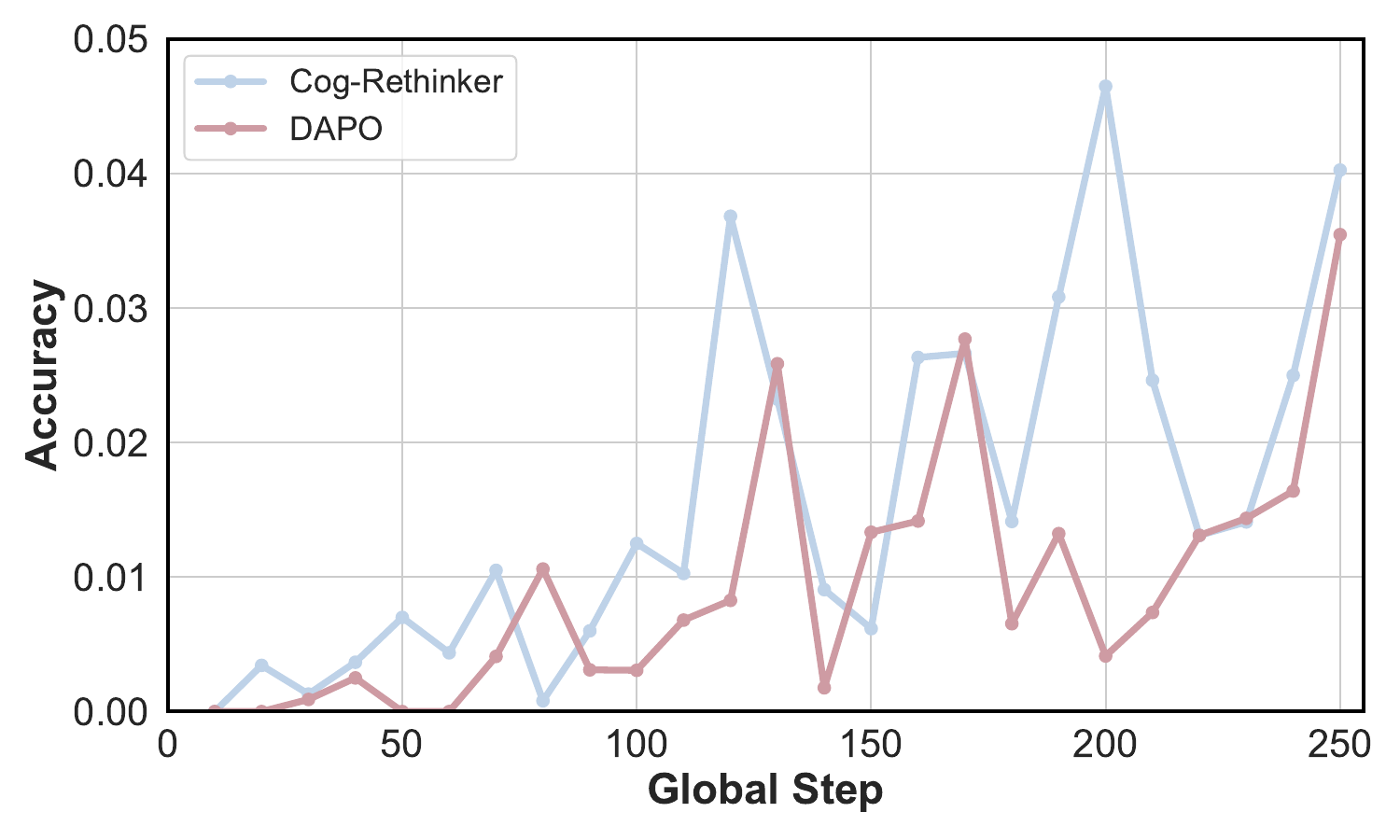}}
    \caption{Additional training visualization between our \methodabb{} with DAPO.}
    \label{fig:training_visual}
\end{figure}
Figure~\ref{fig:training_visual} shows our \methodabb{} consistently outperforms DAPO in all tests. On the MATH500 benchmark in Figure~\ref{fig:training_visual}(a), our method outperforms 2\% than DAPO and converges faster, becoming stable after 150 training steps. For GPQA-Diamond in Figure~\ref{fig:training_visual}(b), our method maintains accuracy between 22.5\% and 25\% after step 200, and shows more stable performance between 210-250 steps than DAPO. The smaller graphs show our method stays stable during important training periods (steps 150-250), while DAPO stops improving or gets worse.

The AIME results in Figure~\ref{fig:training_visual}(c)-(d) show both methods have unstable results due to difficult problems, but our approach works more reliably. Specially, for AIME 25, our method keeps a 2\% to 4\% lead even when results vary more.

\subsection{Hyperparameter Analysis}\label{sec:hyper}
In our experiments, the main hyperparameters are $\varepsilon_{\text{low}}$ and $\varepsilon_{\text{high}}$, which are used in DAPO for clip-higher. We adopt the default DAPO settings of $\varepsilon_{\text{low}}=0.20$ and $\varepsilon_{\text{high}}=0.28$. 

For our newly introduced hyperparameter $\lambda$, we analyze its influence by testing values in $\{0.04, 0.02, 0.01, 0.005\}$, with results shown in Figure~\ref{fig:sft_lambda}.
The results demonstrate dataset-dependent responses to SFT coefficient $\lambda$ variations. GSM8K and MATH-500 show  $\lambda$-sensitivity. AMC 2023, Gaokao 2023en exhibit limited accuracy variation (42.5\%-47.5\%) across $\lambda$ values. Challenging benchmarks, such as AIME 2024/2025, OlympiadBench, maintain consistently low performance (3.3\%-6.7\%), showing minimal $\lambda$-sensitivity. Minerva and GQA-Diamond display moderate accuracy (15.1\%-24.4\%). These patterns indicate that $\lambda$ tuning primarily benefits already well-performing datasets, while complex reasoning tasks require fundamental model improvements beyond hyperparameter optimization. The findings highlight the critical interplay between dataset characteristics and regularization effectiveness in mathematical reasoning tasks.

Regarding optimization steps per global step, the main experiments use a batch size of 32 with four optimization steps. We also conduct experiments with mini batch sizes 64 and 128, corresponding to two and one optimization steps respectively. The results are shown in Figure~\ref{fig:mini_bsz}.

Figure~\ref{fig:mini_bsz} evaluates three mini-batch configurations across multiple benchmarks, demonstrating consistent performance advantages for smaller mini batch sizes. The results show a clear hierarchy where batch\_size=32 outperforms larger batches in most tasks, achieving 77.5\% on GSM8K (vs. 74.8\% for 128) and showing the most substantial 5.6\% gain on MATH-500. While this trend holds universally, the magnitude varies by domain - mathematical problems like MATH-500 benefit most, while complex tasks like AIME 25 show greater variance (6.7\% for 32 vs. 0\% for 64). The findings confirm batch size selection should balance computational efficiency with these task-specific patterns, particularly for mathematical problems where smaller batches show clearest advantages.

\begin{figure}[!t]
    \centering
    \includegraphics[width=\linewidth]{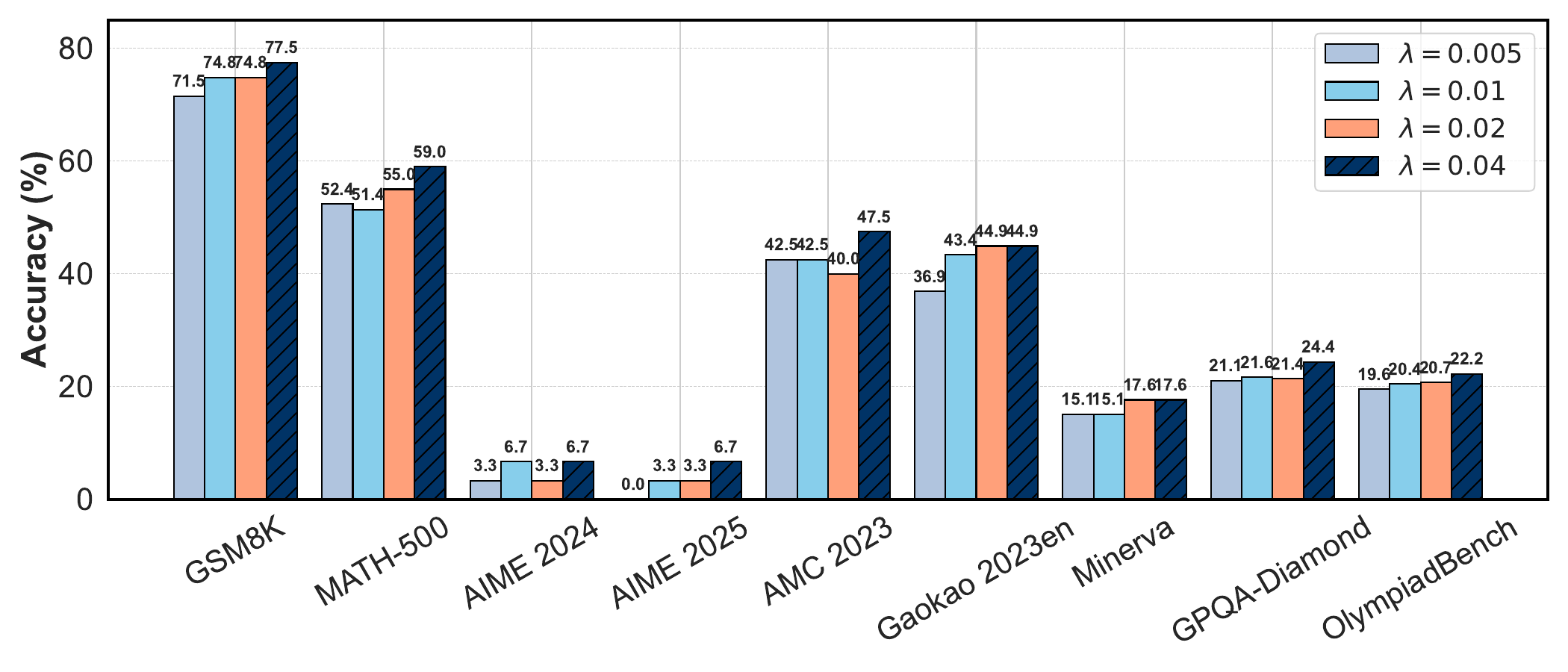}
    \caption{Performance comparison of different SFT  coefficient $\lambda$ for policy $\pi_{\theta}$ update.}
    \label{fig:sft_lambda}
\end{figure}

\begin{figure}[!t]
    \centering
    \includegraphics[width=\linewidth]{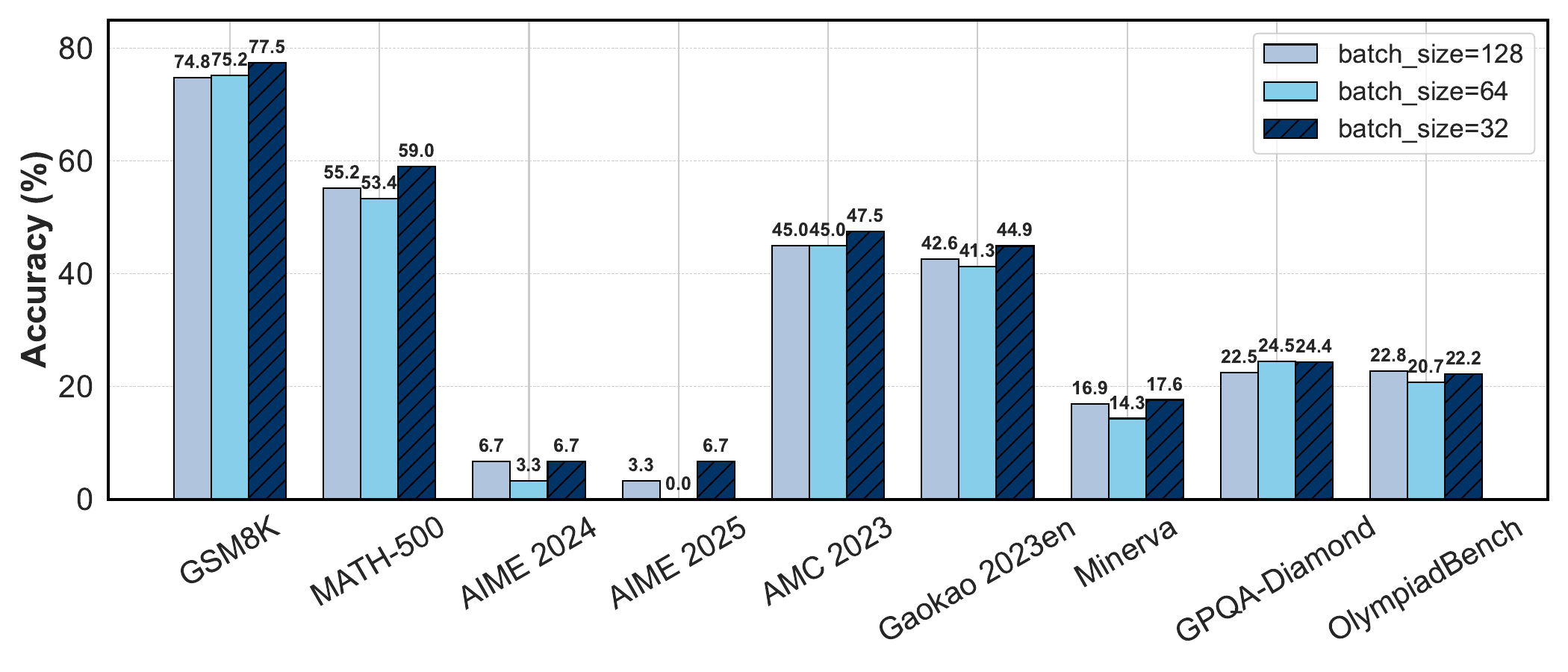}
    \caption{Performance comparison of different training mini batch size for policy $\pi_{\theta}$ update, where batch\_size=128, batch\_size=64 and batch\_size=32 represent the optimization steps as 1, 2 and 4 in each global step, respectively.}
    \label{fig:mini_bsz}
\end{figure}

\subsection{Performance Analysis}\label{sec:hyper}
\begin{table}[!t]
\centering
\caption{Overall accuracy performance of Llama3.2 models on various reasoning benchmarks. The best and second best results are in \textbf{bold} and \underline{underlined}.}\label{tab:llama}
\resizebox{\linewidth}{!}{
\begin{tabular}{cccccccccc}
\toprule
 \textbf{Method} &  GSM8K & MATH-500 & AIME 24 & AIME 25 & AMC 2023 &Gaokao 2023en & Minerva & Olympiad \\
\midrule
 \rowcolor{gray!30} Llama3.2-1B-Base & 1.74 & 3.80 & 0.00 & 0.00 & 0.00 & 0.00 & 2.57 & 1.04 \\\midrule
PPO & 29.09 & \underline{15.00} & 0.00 & 0.00 & 2.50 & 2.60 & 2.21 & 3.41\\
GRPO & 28.60 & 13.41 & 0.00 & 0.00 & 10.00 & \underline{5.19} & \underline{1.47} & 3.30 \\
Reinforce++ & \textbf{34.93} & 14.60 & 0.00 & 0.00 & \underline{12.50} & \underline{5.19} & \underline{1.47} & \underline{3.62} \\
BODF & -- & -- & -- & -- & -- & -- & -- & -- \\
DAPO & -- & -- & -- & -- & -- & -- & -- & -- \\
\midrule
\rowcolor{orange!30!white}
\textbf{Cog-rethinker} & \underline{34.70} & \textbf{17.80} & 0.00 & 0.00 & \textbf{18.50} & \textbf{8.68} & \textbf{3.49} & \textbf{4.38} \\
\midrule
 \rowcolor{gray!30}  Llama3.2-3B-Base & 6.97 & 6.40 & 0.00 & 0.00 & 0.00 & 0.00 & 5.51 & 1.48 \\ \midrule
PPO & 20.43 & 17.80 & 0.00 & 0.00 & 10.00 & \textbf{17.53} & 7.78 & 5.63 \\
GRPO & 24.30 & 17.40 & 0.00 & 0.00 & \textbf{17.50} & 8.57 & 8.04 & 	5.19 \\
Reinforce++ & \underline{27.81} & \underline{22.20} & 0.00 & 0.00 & \underline{12.50} & \textbf{17.53} & \underline{8.15} & \underline{5.78} \\
BODF & -- & -- & -- & -- & -- & -- & -- & -- \\
DAPO & -- & -- & -- & -- & -- & -- & -- & -- \\ \midrule
\rowcolor{red!30!white}
\textbf{Cog-rethinker} & \textbf{32.73} & \textbf{26.60} & 0.00 & 0.00 & \textbf{17.50} & \underline{17.27} & \textbf{9.23} & \textbf{8.48} \\
\bottomrule
\end{tabular}}
\end{table}

We conduct the experiments on Llama3.2~\citep{metallama2024} in Table~\ref{tab:llama}. The results also demonstrate the superiority of our proposed \methodabb{}, particularly in scenarios where the baseline model Llama exhibits limited reasoning capability.
Both BODF and DAPO failed to complete the training process due to challenges associated with online filter design.
Specifically, Llama's weak mathematical reasoning ability led to a substantial number of samples with zero accuracy, resulting in insufficient valid training batches.
In contrast, our \methodabb{} successfully completed training and demonstrated superior performance compared to baseline methods, which can be attributed to its effective decomposition and reflection rollout mechanism.

We also conducted comparison of our \methodabb{} and DAPO in handling negative samples under the same rollout time in Table~\ref{tab:model_performance}.
In extreme cases, our \methodabb{} requires up to three times the number of rollouts when performing both decomposition and reflection operations.
To ensure a fair comparison, we allocated the maximum potential number of rollouts on all samples to DAPO (DAPO\textsubscript{rollout}).
While increasing the number of rollouts led to a minor performance improvement in DAPO, our \methodabb{} achieves significantly greater gains.
This discrepancy arises because DAPO\textsubscript{rollout} encounters limitations inherent to the base model's capacity, and merely increasing rollout number of negative samples does not enable the model to surpass these inherent constraints.
In contrast, our \methodabb{} enables the model to break down complex questions into simpler sub-problems through its rollout mechanism, thereby transcending the base model's limitations.
This observed phenomenon substantiates the effectiveness of our \methodabb{}.

\begin{table}[!t]
\centering
\caption{Comparison of our \methodabb{} and DAPO on Qwen2.5 models in handling negative samples under the same rollout number.}
\label{tab:model_performance}
\resizebox{\linewidth}{!}{
\begin{tabular}{@{}cccccccccc@{}}
\toprule
\textbf{Method} & GSM8K &MATH-500  & AIME 24 & AIME 25 & AMC 2023 &Gaokao 2023en & Minerva & Olympiad \\
\midrule
{Qwen2.5-1.5B-Base} \\
DAPO & \underline{77.56} & \underline{56.00} & 6.67 & 0.00 & \textbf{47.50} & 42.34 & 16.54 & 22.22 \\
DAPO\textsubscript{rollout} & \textbf{77.78} & 55.56 & 6.67 & 0.00 & \underline{47.00} & \underline{43.48} & \underline{16.77} & 22.22 \\ \midrule
\rowcolor{blue!10!white} \textbf{Cog-Rethinker} & 77.51 & \textbf{59.00} & 6.67 & \textbf{6.67} & \textbf{47.50} & \textbf{44.94} & \textbf{17.65} & 22.22 \\
\midrule
{Qwen2.5-7B-Base} \\
DAPO & 92.21 & \underline{79.40} & \textbf{26.67} & 26.67 & \underline{69.00} & \underline{63.22} & \underline{31.88} & 42.44 \\
DAPO\textsubscript{rollout} & \underline{93.02} & 78.90 & \underline{23.33} & 26.67 & 66.88 & \underline{64.41} & 31.27 & \underline{43.78} \\ \midrule
\rowcolor{blue!10!white} \textbf{Cog-Rethinker} & \textbf{93.32} & \textbf{80.60} & \textbf{26.67} & 26.67 & \textbf{73.50} & \textbf{65.52} & \textbf{32.98} & \textbf{44.22} \\
\bottomrule
\end{tabular}}
\end{table}


\section{Limitations}
Our \methodabb{}'s performance is contingent on the quality of the base model and the predefined demonstrations in its metacognitive buffer, which may limit its adaptability to unseen or highly novel problems. The framework assumes access to accurate subproblem decompositions and reflections, which may not always be feasible in practice. Additionally, the binary reward system lacks granularity to reward intermediate reasoning steps, potentially hindering nuanced learning. The experiments focus on mathematical reasoning, and generalization to other domains remains untested.


\end{document}